\theoremstyle{definition}
\newtheorem{definition}{Definition}
\theoremstyle{plain}
\newtheorem{theorem}{Theorem}
\newtheorem{lemma}{Lemma}
\newtheorem{corollary}{Corollary}
\DeclarePairedDelimiter\floor{\lfloor}{\rfloor}
\newcommand{\naturals}{\mathbb{N}}
\newcommand{\reals}{\mathbb{R}}
\newcommand{\field}{\mathbb{F}}
\newcommand{\binarySet}{\{0,1\}}
\newcommand{\hypercube}{\binarySet^n}
\newcommand{\range}[2]{[#1..#2]}
\newcommand{\innerProduct}[2]{\langle #1, #2 \rangle}
\DeclareMathOperator{\gl}{GL}
\DeclareMathOperator{\slg}{SL}
\DeclareMathOperator{\E}{\mathbb{E}}
\DeclareMathOperator{\proba}{\mathbb{P}}
\begin{document}

\title{Affine OneMax\footnote{An extended two-page abstract of this
    work will appear in \emph{2021 Genetic and Evolutionary
      Computation Conference Companion (GECCO '21 Companion)}.
    \url{https://doi.org/10.1145/3449726.3459497}}}

\author{Arnaud Berny}
\maketitle

\begin{abstract}
  A new class of test functions for black box optimization is
  introduced. Affine OneMax (AOM) functions are defined as
  compositions of OneMax and invertible affine maps on bit vectors.
  The black box complexity of the class is upper bounded by a
  polynomial of large degree in the dimension. The proof relies on
  discrete Fourier analysis and the Kushilevitz-Mansour algorithm.
  Tunable complexity is achieved by expressing invertible linear maps
  as finite products of transvections. The black box complexity of
  sub-classes of AOM functions is studied. Finally, experimental
  results are given to illustrate the performance of search algorithms
  on AOM functions.
\end{abstract}

Keywords:
Combinatorial optimization,
black box optimization,
test functions,
tunable complexity,
linear group,
transvections,
black box complexity,
discrete Fourier analysis

\section{Introduction}

Theoretical and empirical analyses of search algorithms in the context
of black box optimization often require test functions. On the
practical side, an algorithm is usually selected by its performance
across a collection of diverse test functions such as OneMax,
LeadingOnes, MaxSat, etc. In particular, NK landscapes
\cite{kauffman-1993} are a class of functions with tunable complexity.
In this model, the fitness of an $n$-dimensional bit vector is the sum
of $n$ partial functions, one per variable. Each partial function
depends on a variable and its $k$ neighbors. The number $k$ controls
the interaction graph hence the complexity of the fitness landscape.
NK landscapes have found many applications from theoretical biology to
combinatorial optimization. When used as test functions, their
flexibility comes at the price of a great number of parameters. Values
of partial functions are often sampled from normal or uniform
distributions, which generates highly irregular landscapes with
unknown maximum, even for small $k$.

Affine OneMax (AOM) functions are test functions defined as
compositions of OneMax and invertible affine maps on bit vectors. They
are integer-valued, have a known maximum, and their representations
only require a number of bits quadratic in the dimension. The idea of
composition of a fitness function and a linear or affine map has
already been explored in the context of evolutionary computation
\cite{liepins-al-1990,liepins-vose-1991,salcedo-sanz-2007}. The
problem addressed in this line of research is to identify a
representation of the search space, e.g.\ an affine map, able to
transform a deceptive function into an easy one for genetic
algorithms. In Sec.~\ref{sec:black-box-complexity}, we propose an
algorithm which learns such a representation for AOM functions.
Considering the identity as a linear map, it appears that functions of
increasing complexity can be obtained starting with the identity and
applying small perturbations to it. In the language of group theory,
those perturbations are called elementary (or special) transvections.
Sequences of elementary transvections are multiplied to obtain
arbitrary linear maps. The key parameter of resulting AOM functions is
the sequence length which is the analogue of parameter $k$ in NK
landscapes.

The theory of black box complexity \cite{droste-2006,doerr-2020}
usually involves classes of functions rather than single functions.
The black box complexity $B_{\mathcal{F}}$ of a class $\mathcal{F}$ of
functions is defined as
$\inf_{A\in\mathcal{A}} \sup_{f\in\mathcal{F}} \allowbreak \E(T(A,
f))$, where $\mathcal{A}$ is the set of randomized search algorithms
and $\E(T(A, f))$ the expected runtime of algorithm $A$ on function
$f$. We will only consider unrestricted randomized search algorithms
hence unrestricted black box complexity. The usual way to devise a
function class is to supplement a given function with modified
versions of it. For example, instead of OneMax alone, it is customary
to consider the function class comprising all compositions of OneMax
with linear permutations (permutation of variables) and translations.
Given the fact that linear permutations and translations are
invertible affine maps, it seems natural to consider all invertible
affine maps. In this paper, we prove that the black box complexity of
AOM functions is upper bounded by $O(n^{10}\log^2 n)$. The proof uses
the Kushilevitz-Mansour algorithm \cite{kushilevitz-mansour-1993}
which is an application of discrete Fourier (or Walsh) analysis to the
approximation of Boolean functions. Fourier analysis has been
extensively used in the analysis of genetic algorithms
\cite{goldberg-1989-a} or fitness landscapes
\cite{stadler,heckendorn-al-1999-a,heckendorn-al-1999-b}. It also
plays an important role in the construction of a search matrix in the
analysis \cite{bshouty-2009} of the coin weighing problem
\cite{erdos-renyi-1963} which is related to OneMax.

The paper is organized as follows. Sec.~\ref{sec:general-case}
introduces general AOM functions and gives their basic properties.
Sec.~\ref{sec:tunable-complexity} introduces elementary transvections and
their products. Sec.~\ref{sec:black-box-complexity} addresses the
black box complexity of AOM functions. In Sec.~\ref{sec:experiments},
we report results of experiments involving search algorithms on random
instances. Sec.~\ref{sec:conclusion} concludes the paper.

\section{General case}
\label{sec:general-case}

The OneMax function $\ell : \hypercube \rightarrow \reals$ is defined
by $\ell(x) = \sum_{i=1}^n x_i$, where the $x_i$'s are seen as real
numbers. OneMax takes $n+1$ values $0, 1, \ldots, n$ and, for all
$k\in\range{0}{n}$, $|\ell^{-1}(k)| = \binom{n}{k}$. It reaches its
maximum $n$ only at $1^n = (1, 1, \ldots, 1)$.

From now on, the set $\binarySet$ is seen as the finite field
$\field_2$ ($1+1\equiv 0 \pmod{2}$) and $\hypercube$ as a linear space
over $\field_2$. The canonical basis of $\hypercube$ is denoted by
$(e_i)$, where $i\in\range{1}{n}$. An affine map
$\hypercube \rightarrow \hypercube$ is defined by $x \mapsto Mx+b$,
where $M$ is a $n\times n$ bit matrix and $b$ a $n\times 1$ bit
vector. Sums and products in $Mx+b$ are computed in $\field_2$ not in
$\reals$.

\paragraph{Definition and basic properties}

An affine OneMax function $f$ is the composition of OneMax and an
invertible affine map. More precisely, it is defined by
$f(x) = \ell(Mx+b)$, where $M\in \gl(n, \field_2)$, that is $M$ is an
invertible matrix. The invertibility of $M$ is important because it
ensures that $f$ shares the properties of OneMax outlined above.
Firstly, it takes $n+1$ values $0, 1, \ldots, n$ and, for all
$k\in\range{0}{n}$, $|f^{-1}(k)| = \binom{n}{k}$. Secondly, it reaches
its maximum $n$ only once at $x^*$ such that $Mx^*+b = 1^n$ or
$x^* = M^{-1}(1^n + b)$. The set of all AOM functions will be denoted
by $\mathcal{F}$. By construction, $\mathcal{F}$ is closed under
invertible affine maps but
\begin{theorem}
  $\mathcal{F}$ is not closed under permutations.
\end{theorem}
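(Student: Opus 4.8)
The plan is to read ``permutations'' as arbitrary bijections $\sigma\colon\hypercube\to\hypercube$ of the search space: this is the natural companion of the preceding sentence (``closed under invertible affine maps'' means $f\circ A\in\mathcal F$ for $A$ affine invertible), and it is forced, since permutations of the variables are themselves invertible affine maps and so leave $\mathcal F$ invariant. The separating invariant I would use is the size of the Walsh--Fourier support. Write a bit $z\in\binarySet$ as $z=\tfrac12(1-(-1)^z)$, let $\chi_a(x)=(-1)^{\innerProduct{a}{x}}$ for $a\in\field_2^n$, and expand $\ell(Mx+b)=\sum_{i=1}^n(Mx+b)_i$. With $m_1,\dots,m_n$ the rows of $M$ this gives
\begin{equation*}
  f(x)=\ell(Mx+b)=\frac n2-\frac12\sum_{i=1}^n(-1)^{b_i}\chi_{m_i}(x).
\end{equation*}
Since $M$ is invertible, $m_1,\dots,m_n$ are $n$ distinct nonzero vectors forming a basis, so \emph{every} AOM function has exactly $n+1$ nonzero Fourier coefficients: the constant $\hat f(0)=n/2$, and $n$ coefficients equal to $\pm\tfrac12$ at basis frequencies. (A converse also holds, but only this direction is needed.)

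It then suffices to produce one $f\in\mathcal F$ and one bijection $\sigma$ such that $f\circ\sigma$ has more than $n+1$ nonzero Fourier coefficients. I would take $n=3$, $f=\ell$ (which is AOM with $M=I$, $b=0$), and let $\sigma$ be the transposition of $\hypercube$ exchanging $(0,0,0)$ and $(1,0,0)$ and fixing the other six points; then $g:=\ell\circ\sigma$ equals $\ell$ except $g(0,0,0)=1$ and $g(1,0,0)=0$. Writing $g=\ell+\delta$ with $\delta$ supported on $\{(0,0,0),(1,0,0)\}$, a one-line computation gives $\hat\delta(a)=\tfrac14$ when $a_1=1$ and $0$ otherwise; combining with $\hat\ell$ (value $3/2$ at $0$, $-1/2$ at $e_1,e_2,e_3$) shows $\hat g$ is nonzero precisely at $0,e_1,e_2,e_3,e_1{+}e_2,e_1{+}e_3,e_1{+}e_2{+}e_3$ — seven frequencies — so $g\notin\mathcal F$. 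To cover all $n\ge3$ I would pad with dummy variables: set $g_n(x)=g(x_1,x_2,x_3)+x_4+\dots+x_n=\ell\bigl(\sigma(x_1,x_2,x_3),x_4,\dots,x_n\bigr)$, which is $\ell$ composed with the bijection $\sigma\times\mathrm{id}$ and has Fourier support of size $n+4>n+1$.

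The point worth flagging is that the ``obvious'' invariant is the wrong one: the level-set cardinalities $\absoluteValue{f^{-1}(k)}=\binom nk$ are preserved by \emph{every} bijection of $\hypercube$, so they cannot witness non-closure; one genuinely needs a feature that is not permutation-invariant, and the Fourier support size — already the engine of the paper's main theorem — is the natural candidate. Two minor remarks I would include: for $n\le2$ there is no counterexample (for $n=2$ one checks directly that every function with level sizes $1,2,1$ is already AOM), so the statement is effectively ``for $n\ge3$''; and if one instead interprets ``permutations'' as permutations of the value set $\{0,\dots,n\}$, the even cheaper argument via the cardinalities $\binom nk$ already works as soon as $n\ge3$.
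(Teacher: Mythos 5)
Your proof is correct, but it takes a genuinely different route from the paper's. The paper also perturbs $\ell$ by the transposition exchanging $0$ and $e_1$, but then argues by contradiction from five function values: assuming $\ell\circ\pi=\ell(M\cdot+b)$, the values at $0,e_1,e_2,e_3$ force $b=e_{i_1}$, $Me_1=e_{i_1}$, $Me_2=e_{i_1}+e_{i_2}$, $Me_3=e_{i_1}+e_{i_3}$, and then $g(e_2+e_3)=2$ clashes with $\ell(e_{i_1}+e_{i_2}+e_{i_3})$ being odd. That argument is entirely elementary, works verbatim for every $n\ge3$, and needs no machinery. You instead exhibit a permutation-sensitive invariant --- the size of the Fourier support, which is exactly $n+1$ for every AOM function since the nonzero frequencies are $0$ together with the $n$ distinct rows of the invertible matrix $M$ --- and show the perturbed function has support of size $7$ (resp.\ $n+4$ after padding). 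This reuses the spectrum computation the paper develops anyway in its Fourier-analysis paragraph, gives a quantitative sense in which $\ell\circ\pi$ fails to be AOM rather than a bare contradiction, and makes transparent \emph{why} some invariant beyond the level-set sizes is needed (a point the paper never flags). The costs are that you need the explicit Walsh expansion up front, and your extension to general $n$ goes through a padded permutation $\sigma\times\mathrm{id}$ rather than the single transposition of $\hypercube$ the paper uses; both are minor. Your observation that the statement is vacuous for $n\le2$ (where $\mathcal F$ consists of all functions with level sizes $\binom nk$ and hence is permutation-closed) is accurate and is an implicit hypothesis the paper leaves unstated, since its proof also invokes $e_1,e_2,e_3$.
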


\begin{proof}
  Let $f = \ell \in \mathcal{F}$ and $\pi$ the (nonlinear) permutation
  of $\hypercube$ which exchanges $0$ and $e_1$ and leaves other
  vectors unchanged. We will prove by reductio ad absurdum that
  $f \circ \pi$ does not belong to $\mathcal{F}$. Suppose that
  $g = f \circ \pi \in \mathcal{F}$. Then, there exists
  $(M, b) \in \gl(n, \field_2) \times \hypercube$ such that, for all
  $x\in\hypercube$, $g(x) = \ell(Mx+b)$. We will need $g(0)$,
  $g(e_1)$, $g(e_2)$, and $g(e_3)$. We have
  $g(0) = (\ell \circ \pi)(0) = \ell(e_1) = 1$ and $g(0) = \ell(b)$.
  Then, there exists $i_1 \in \range{1}{n}$ such that $b = e_{i_1}$.
  We have $g(e_1) = (\ell \circ \pi)(e_1) = \ell(0) = 0$ and
  $g(e_1) = \ell(Me_1+b)$. Then,
  $Me_1+b = 0 \Leftrightarrow Me_1 = b = e_{i_1}$. We have
  $g(e_2) = (\ell \circ \pi)(e_2) = \ell(e_2) = 1$ and
  $g(e_2) = \ell(Me_2+b)$. Then, there exists $i_2 \in \range{1}{n}$
  such that
  $Me_2+b = e_{i_2} \Leftrightarrow Me_2 = e_{i_1} + e_{i_2}$.
  Similarly, there exists $i_3 \in \range{1}{n}$ such that
  $Me_3 = e_{i_1} + e_{i_3}$. To reach a contradiction, we consider
  $x = e_2 + e_3$. We have $g(x) = (\ell \circ \pi)(x) = \ell(x) = 2$
  and
  $g(x) = \ell(Mx+b) = \ell(Me_2 + Me_3 + b) = \ell(e_{i_1} + e_{i_2}
  + e_{i_3})$ which is necessarily odd. We conclude that $f \circ \pi$
  does not belong to $\mathcal{F}$.
\end{proof}

As a consequence, by a NFL theorem \cite{schumacher-al-2001}, the
average performance of an algorithm over all AOM functions depends on
the algorithm. Finally, we state a property relating AOM functions and
linear permutations. Let
$\Phi : \gl(n, \field_2) \times \hypercube \to \mathcal{F}$ be the map
defined by $\Phi(M, b) = f$, where $f(x) = \ell(Mx+b)$. Then, for all
$(M, b)$ and all linear permutations $\pi$,
$\Phi(\pi M, \pi b) = \Phi(M, b)$. This is a direct consequence of the
fact that OneMax itself is invariant under linear permutations, that
is, for all linear permutations $\pi$, $\ell \circ \pi = \ell$. Given
an AOM function $\Phi(M, b)$, an algorithm can only learn $M$ and $b$
up to a linear permutation.

To sample a random AOM function, one has to sample an invertible
matrix and a vector. An invertible matrix can be sampled by means of
rejection sampling. The success probabiblity $p_n$ is equal to
$|\gl(n, \field_2)| / 2^{n^2}$, where
$|\gl(n, \field_2)| = (2^n - 2^0)(2^n - 2^1) \cdots (2^n - 2^{n-1})$
and $2^{n^2}$ is the number of $n\times n$ matrices. By a convexity
argument,
$\ln p_n = \sum_{k=1}^n \ln(1-2^{-k}) \ge -2\ln 2 (1 - 2^{-n}) \ge
-2\ln 2$. Thus, the expected number of trials to sample an invertible
matrix is bounded from above by 4.

\paragraph{Fourier analysis}

Let $\mathcal{E}$ denote the space of pseudo-Boolean functions. For
all functions $f, g\in\mathcal{E}$, define their inner product by
$\innerProduct{f}{g} = 2^{-n} \sum_x f(x) \cdot g(x)$, where the sum
ranges over $\hypercube$. It will be useful to interpret inner
products as expectations. More precisely,
$\innerProduct{f}{g} = \E(fg) = \E(f(X)g(X))$, where $X$ is a random
vector with uniform distribution on $\hypercube$. For all
$x,u\in\hypercube$, let $\chi_u(x) = (-1)^{x\cdot u}$, where
$x\cdot u = \sum_i x_iu_i$. The $\chi_u$'s define an orthonormal basis
of $\mathcal{E}$. For all functions $f\in\mathcal{E}$,
$f = \sum_u \lambda_u \chi_u$, where
$\lambda_u = \innerProduct{f}{\chi_u}$. The set of coefficients
$\lambda_u$ is called the spectrum of $f$. The coefficient $\lambda_0$
is equal to the average value of $f$. A function $f$ is $t$-sparse if
it has at most $t$ nonzero coefficients. The Fourier transform
$\widehat{f}$ of $f$ is defined by $\widehat{f}(u) = \lambda_u$. The
support of $\widehat{f}$ is called the set of feature vectors of $f$.

The computation of the spectrum of OneMax gives the nonzero
coefficients $\widehat{\ell}(0) = n/2$ and
$\widehat{\ell}(e_i) = -1/2$. If we discard $\widehat{\ell}(0)$, we
can consider that it is made of $n$ nonzero coefficients of equal
amplitude, one for each basis vector. In order to compute the spectrum
of an AOM function, we use the known properties of Fourier transform
relative to linear maps and translations. For all $f\in\mathcal{E}$
and $M\in \gl(n, \field_2)$, if $g = f\circ M$ then
$\widehat{g} = \widehat{f} \circ (M^{-1})^\intercal$. For all
$b\in\hypercube$, let $T_b : \hypercube \rightarrow \hypercube$ be the
translation defined by $T_b(x) = x + b$. If $g=f\circ T_b$ then
$\widehat{g}(u) = (-1)^{u\cdot b} \widehat{f}(u)$.

Now, let $f = \Phi(M, b)$ be an AOM function. Then,
$f = (\ell \circ T_b) \circ M$,
$\widehat{f} = \widehat{(\ell \circ T_b)} \circ (M^{-1})^\intercal$,
and
$\widehat{f}(u) = \widehat{(\ell \circ T_b)}(v) = (-1)^{v\cdot b}
\widehat{\ell}(v)$, where $v = (M^{-1})^\intercal u$. Combining the
last result with the spectrum of $\ell$ and solving for $u$ in the
equations $(M^{-1})^\intercal u = e_i$, we obtain the nonzero
coefficients $\widehat{f}(0) = n/2$ and
$\widehat{f}(M^\intercal e_i) = -(-1)^{b_i}/2$. The spectrum of AOM
functions is similar to that of OneMax (same number of nonzero
coefficients and same amplitudes). There is an additional phase factor
$(-1)^{b_i}$ but the most important difference is that a nonzero
feature vector can now be any nonzero vector instead of just one of
the basis vectors. More precisely, the nonzero feature vectors of $f$
are the rows of $M$. Given that $M$ is invertible, they can form any
basis of $\hypercube$. As a consequence, AOM functions are not bounded
in the sense that if $u$ is a nonzero feature vector of an AOM
function then $\ell(u)$ can take any value in $\range{1}{n}$.

\section{Tunable complexity}
\label{sec:tunable-complexity}

We would like to sort AOM functions from the easiest to the hardest
ones to maximize or at least provide them with some structure. We will
achieve this goal using transvections. A transvection is an invertible
linear map $\hypercube \rightarrow \hypercube$ defined by
$x \mapsto x + h(x) a$, where $h$ is a non constant linear form and
$a\neq 0$ a vector such that $h(a) = 0$. Transvections generate the
special linear group $\slg(n, \field_2)$ which is represented by the
group of matrices of determinant 1. Since being $=1$ is the same as
being $\neq 0$ in $\field_2$, the special linear group
$\slg(n, \field_2)$ is also the general linear group
$\gl(n, \field_2)$ of invertible linear maps. In the general case, the
inverse of $x \mapsto x + h(x) a$ is $x \mapsto x - h(x) a$. In the
case of $\field_2$, a transvection is equal to its own inverse.

We concentrate on particular transvections called elementary
transvections. For all $i,j\in\range{1}{n}$, with $i\neq j$, let
$\tau_{ij}$ denote an elementary transvection. For all
$x\in\hypercube$, $\tau_{ij}x$ is obtained from $x$ by adding $x_j$ to
$x_i$ or $x_i \leftarrow x_i + x_j$, leaving other bits unchanged. At
most one bit is changed. It is clear that $\tau_{ij}$ is a linear map.
Its matrix is defined by $I_n + B_{ij}$, where $I_n$ is the
$n\times n$ identity matrix and $B_{ij}$ is the matrix whose $(i, j)$
entry is 1 and other entries are zero. Throughout the remainder of
this paper, we will write transvection to mean elementary
transvection. For all disjoint pairs $\{i, j\}$ and $\{k, l\}$, the
transvections $\tau_{ij}$ and $\tau_{kl}$ commute, that is,
$\tau_{ij} \tau_{kl} = \tau_{kl} \tau_{ij}$. Such transvections will
be said disjoint. We have already seen that $\tau_{ij}^2 = I_n$. For
all sets $\{i, j, k\}$, $\tau_{ij}$ and $\tau_{ik}$ (same destination)
commute, $\tau_{ik}$ and $\tau_{jk}$ (same source) commute, but
$\tau_{ij}$ and $\tau_{jk}$ do not.

Having defined transvections, we consider finite products of
transvections, gradually increasing the requirements, and the
corresponding classes of AOM functions. Let $t$ be a positive integer.
Let $G_t$ be the set of all products of $t$ transvections. Let $G_t^c$
be the set of all products of $t$ (pairwise) commuting transvections.
Let $M = \prod_{k=1}^t \tau_{i_kj_k}$, $I$ be the set of destination
indices, and $J$ be the set of source indices. The transvections are
commuting if and only if the sets $I$ and $J$ are disjoint.
Consequently, the set $G_t^c$ is defined (non empty) for all
$t\in\range{1}{\floor{n^2/4}}$. Let $G_t^\delta$ be the set of all
products of $t$ commuting transvections where each source index
appears at most once. In this case, for all products, we can define a
function $\delta$ from sources $J$ to destinations $I$. Let
$G_t^\sigma$ be the set of all products of $t$ commuting transvections
where each destination index appears at most once. In this case, for
all products, we can define a function function $\sigma$ from
destinations $I$ to sources $J$. Both sets $G_t^\delta$ and
$G_t^\sigma$ are defined for all $t\in\range{1}{(n-1)}$. Finally, for
all $t\in\range{1}{\floor{n/2}}$, let $G_t^d$ be the set of all
products of $t$ disjoint transvections. We have the inclusions
$G_t^d \subset G_t^\delta, G_t^\sigma \subset G_t^c \subset G_t
\subset \gl(n, \field_2)$.

We define function classes corresponding to sets of transvection
products. Let $\mathcal{F}_t$ be the set of functions
$x\mapsto \ell(Mx+b)$, where $M\in G_t$ and $b\in\hypercube$. Classes
$\mathcal{F}_t^c$, $\mathcal{F}_t^\sigma$, $\mathcal{F}_t^\delta$, and
$\mathcal{F}_t^d$ are defined similarly. We have the inclusions
$\mathcal{F}_t^d \subset \mathcal{F}_t^\delta, \mathcal{F}_t^\sigma
\subset \mathcal{F}_t^c \subset \mathcal{F}_t \subset \mathcal{F}$.
Functions in classes $\mathcal{F}_t$ will be refered to as
transvection sequence AOM (TS-AOM) functions.

As in the general case, we need to sample random TS-AOM functions in
$\mathcal{F}_t$. We propose to sample uniformly distributed random
sequences of $t$ transvections. However, the distribution of their
products is not uniform on $G_t$ and the distribution of resulting
functions is not uniform on $\mathcal{F}_t$.

\section{Black box complexity}
\label{sec:black-box-complexity}

Let us first study the black box complexity of general AOM functions.
We will give an upper bound of $B_{\mathcal{F}}$ by means of a
randomized search algorithm which efficiently learns and maximizes AOM
functions. Sparsity is the most important property of AOM functions.
It reminds us of Boolean functions which can be efficiently
approximated by sparse functions using the Kushilevitz-Mansour (KM)
algorithm \cite{kushilevitz-mansour-1993}. Indeed, we will prove that
the KM algorithm can efficiently learn the spectrum of AOM functions.
In the theory of fitness landscapes, most algorithms apply to bounded
functions such as NK landscapes \cite{choi-al-2008}. However, as
explained at the end of Sec.~\ref{sec:general-case}, AOM functions are
not bounded, hence the interest of the KM algorithm.

AOM functions are not Boolean functions so we need to transform them
before applying the KM algorithm.
\begin{lemma}
  \label{lemma:sparsity}
  Let $\mathcal{G} = \{ 2f/n - 1 : f\in\mathcal{F} \}$. If
  $g\in\mathcal{G}$ then $g$ has the following properties:
  \begin{enumerate}
  \item For all $x\in\hypercube$, $|g(x)|\le 1$.
  \item $g$ is $n$-sparse.
  \item For all $u\in\hypercube$, $\widehat{g}(u) = 0$ or
    $|\widehat{g}(u)| = 1/n$.
  \item $\E(g^2) = 1/n$.
  \end{enumerate}
\end{lemma}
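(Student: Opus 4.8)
The plan is to read off everything from the spectral description of AOM functions established just above: for $f = \Phi(M,b)$ the only nonzero Fourier coefficients are $\widehat{f}(0) = n/2$ and $\widehat{f}(M^\intercal e_i) = -(-1)^{b_i}/2$ for $i\in\range{1}{n}$. Property~1 is then immediate and does not even use invertibility: since $f = \ell(Mx+b)$ and $\ell$ takes values in $\range{0}{n}$, the rescaling $g = 2f/n - 1$ maps $\hypercube$ into $\range{-1}{1}$.

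For properties~2 and~3 I would use linearity of the Fourier transform together with the fact that the constant function $1$ equals $\chi_0$, so $\widehat{1}(0) = 1$ and $\widehat{1}(u) = 0$ for $u\neq 0$. Writing $g = (2/n)f - 1$ gives $\widehat{g}(u) = (2/n)\widehat{f}(u) - \widehat{1}(u)$, whence $\widehat{g}(0) = (2/n)(n/2) - 1 = 0$, while $\widehat{g}(M^\intercal e_i) = -(-1)^{b_i}/n$, which has modulus $1/n$, and $\widehat{g}(u) = 0$ for every other $u$. Since $M$ is invertible, the vectors $M^\intercal e_1,\ldots,M^\intercal e_n$ (the rows of $M$) are pairwise distinct and nonzero, so $g$ has exactly $n$ nonzero coefficients, each of modulus $1/n$; this gives both $n$-sparsity and property~3.

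Property~4 then follows from Parseval's identity: $\E(g^2) = \innerProduct{g}{g} = \sum_u \widehat{g}(u)^2 = n\cdot(1/n)^2 = 1/n$. There is no genuine obstacle here; the proof is essentially a bookkeeping consequence of the earlier spectrum computation. The only point that deserves a word of care is that the count in property~2 is \emph{exactly} $n$ rather than something smaller, which is precisely the statement that the rows $M^\intercal e_i$ of an invertible $M$ are distinct and nonzero.
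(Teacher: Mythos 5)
Your proposal is correct and follows exactly the route the paper intends: the lemma is stated without an explicit proof, being a direct consequence of the spectrum computation in Section~\ref{sec:general-case} ($\widehat{f}(0)=n/2$, $\widehat{f}(M^\intercal e_i)=-(-1)^{b_i}/2$), which is precisely what you use together with linearity of the Fourier transform and Parseval. Your added remark that invertibility of $M$ makes the rows $M^\intercal e_i$ distinct and nonzero is the right point to flag for the exact count of $n$ nonzero coefficients.
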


We state the key definition and lemma from the work of Kushilevitz and
Mansour needed to understand their algorithm.

\begin{definition}
  For all $g\in\mathcal{G}$, all $k\in\range{1}{n-1}$, and all
  $u\in\binarySet^k$, the function
  $g_u : \binarySet^{n-k} \rightarrow \reals$ is defined by
  $g_u(x) = \sum_{v\in\binarySet^{n-k}} \widehat{g}(uv)\chi_v(x)$,
  where $uv$ is the concatenation of $u$ and $v$.
\end{definition}

\begin{lemma}
  For all $g\in\mathcal{G}$ and all $k\in\range{1}{n-1}$, the
  following properties are satisfied:
  \begin{enumerate}
  \item For all $u\in\binarySet^k$,
    $\E(g_u^2) = \sum_{v\in\binarySet^{n-k}} (\widehat{g}(uv))^2$.
  \item $\sum_{u\in\binarySet^k} \E(g_u^2) = \allowbreak \E(g^2)$.
  \item For all $u\in\binarySet^k$ and all $x\in\binarySet^{n-k}$,
    $g_u(x) = \E(g(Yx)\cdot \chi_u(Y))$, where $Y$ is a random vector
    with uniform distribution on $\binarySet^k$, and $|g_u(x)|\le 1$.
  \end{enumerate}
\end{lemma}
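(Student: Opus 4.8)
The plan is to prove each of the three properties in turn, relying on the Fourier-analytic setup already in place and on the orthonormality of the basis $(\chi_v)$ on $\binarySet^{n-k}$.

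For property~1, I would compute $\E(g_u^2)$ directly from the definition of $g_u$. Since $g_u = \sum_{v} \widehat{g}(uv)\chi_v$ is an explicit expansion in the orthonormal basis $(\chi_v)_{v\in\binarySet^{n-k}}$ of the space of pseudo-Boolean functions on $\binarySet^{n-k}$, Parseval's identity (which follows from $\innerProduct{\chi_v}{\chi_{v'}} = [v = v']$) gives $\E(g_u^2) = \innerProduct{g_u}{g_u} = \sum_{v\in\binarySet^{n-k}} (\widehat{g}(uv))^2$ at once. For property~2, I would sum the identity from property~1 over all $u\in\binarySet^k$; the right-hand side becomes $\sum_{u\in\binarySet^k}\sum_{v\in\binarySet^{n-k}} (\widehat{g}(uv))^2 = \sum_{w\in\hypercube} (\widehat{g}(w))^2$, since every $w\in\hypercube$ factors uniquely as a concatenation $w = uv$. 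By Parseval applied to $g$ on $\hypercube$, this equals $\innerProduct{g}{g} = \E(g^2)$.

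Property~3 is the one requiring an actual argument rather than bookkeeping. The first claim, the integral representation, I would establish by expanding $g$ in its own Fourier series $g = \sum_{w\in\hypercube} \widehat{g}(w)\chi_w$, writing $w = u'v$, and using the multiplicativity $\chi_{u'v}(Yx) = \chi_{u'}(Y)\chi_v(x)$ for $Y\in\binarySet^k$, $x\in\binarySet^{n-k}$. Then $\E_Y\big(g(Yx)\chi_u(Y)\big) = \sum_{u',v}\widehat{g}(u'v)\chi_v(x)\,\E_Y\big(\chi_{u'}(Y)\chi_u(Y)\big)$, and the expectation over the uniform $Y$ on $\binarySet^k$ is $[u' = u]$ by orthonormality of the characters on $\binarySet^k$. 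Only the terms with $u' = u$ survive, leaving exactly $\sum_{v}\widehat{g}(uv)\chi_v(x) = g_u(x)$. For the bound $|g_u(x)|\le 1$, I would use this representation together with property~1 of Lemma~\ref{lemma:sparsity}: $|g(Yx)|\le 1$ pointwise and $|\chi_u(Y)| = 1$, so $|g_u(x)| = |\E(g(Yx)\chi_u(Y))| \le \E(|g(Yx)|\,|\chi_u(Y)|) \le 1$.

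The main obstacle, such as it is, is purely notational: keeping the concatenation $w = uv$ and the split of the random vector $(Y,x)$ consistent across the two spaces $\binarySet^k$ and $\binarySet^{n-k}$, and being careful that the two orthogonality relations being invoked (characters on $\binarySet^k$ in property~3, and on $\binarySet^{n-k}$ or $\hypercube$ in properties~1--2) are applied on the correct space. No deep idea is needed beyond Parseval and the multiplicativity of Walsh characters; the bound $|g_u(x)|\le 1$ is the only place where the specific structure of $\mathcal{G}$ (via property~1 of the sparsity lemma) enters.
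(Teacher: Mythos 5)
Your proof is correct: Parseval on $\binarySet^{n-k}$ for property~1, summing over prefixes for property~2, and the character-orthogonality computation plus $|g|\le 1$ for property~3 are all sound. The paper itself gives no proof of this lemma---it is quoted directly from Kushilevitz--Mansour---and your argument is exactly the standard one from that source, so there is nothing to add.
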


\begin{algorithm}
  \caption{Kushilevitz-Mansour algorithm}
  \label{algo:km}
  \begin{algorithmic}[1]
    \STATE $\text{KM}(g, m_1, m_2, u)$:
    \STATE $k \leftarrow \text{length}(u)$
    \FORALL{$i\in\range{1}{m_1}$}
    \STATE sample $x_i\in\binarySet^{n-k}$
    \FORALL{$j\in\range{1}{m_2}$}
    \STATE sample $y_{i,j}\in\binarySet^k$
    \ENDFOR
    \STATE $A_i \leftarrow (1 / m_2) \sum_{j=1}^{m_2} g(y_{i,j}x_i) \chi_u(y_{i,j})$
    \ENDFOR
    \STATE $B_u \leftarrow (1 / m_1) \sum_{i=1}^{m_1} A_i^2$
    \IF{$B_u > 1 / (2n^2)$}
    \IF{$k = n$}
    \RETURN $\{u\}$
    \ELSE
    \STATE $U_0\leftarrow\text{KM}(g, m_1, m_2, u0)$
    \STATE $U_1\leftarrow\text{KM}(g, m_1, m_2, u1)$
    \RETURN $U_0 \cup U_1$
    \ENDIF
    \ELSE
    \RETURN $\emptyset$
    \ENDIF
  \end{algorithmic}
\end{algorithm}

The Kushilevitz-Mansour algorithm (see Alg.~\ref{algo:km}) follows a
depth-first search of the complete binary tree of depth $n$. In this
context, $u\in\binarySet^k$ can be seen as a prefix or a path from the
root to an internal node of the tree. The expectation $\E(g_u^2)$
simply counts the number of leaves below $u$ and labelled by feature
vectors. If it is lower than $1/n^2$ then there is no need to expand
the node. However, expectations such as $\E(g_u^2)$ cannot be computed
efficiently so that a random approximation is required. We will use
Hoeffding's inequality \cite{hoeffding-1963} to control the
concentration of sample averages around their expectations.
\begin{lemma}[Hoeffding]
  Let $Z_1$, $Z_2$, ..., $Z_m$ be independent random variables such
  that, for all $i\in\range{1}{m}$, $a_i \le Z_i \le b_i$,
  $c_i = b_i - a_i$, and $\E(Z_i) = \mu$. Let
  $\overline{Z} = (1 / m) \sum_{i=1}^m Z_i$. Then, for all positive
  real numbers $d$,
  $\proba(|\overline{Z} - \mu| \ge d) \le 2 \exp(-2m^2d^2 /
  \sum_{i=1}^m c_i^2)$.
\end{lemma}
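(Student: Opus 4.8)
The plan is to use the exponential-moment (Chernoff) method together with a convexity bound on the moment generating function of a bounded centered random variable. First I would center the variables, writing $W_i = Z_i - \mu$, so that $\E(W_i) = 0$ and $a_i - \mu \le W_i \le b_i - \mu$ with range still $c_i$. For any $s > 0$, Markov's inequality applied to $\exp(s\sum_{i=1}^m W_i)$ gives $\proba(\overline{Z} - \mu \ge d) = \proba\bigl(\sum_{i=1}^m W_i \ge md\bigr) \le e^{-smd}\prod_{i=1}^m \E(e^{sW_i})$, where independence is used to factor the expectation of the product.

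The key step — and the main obstacle — is Hoeffding's lemma: if $W$ is a random variable with $\E(W) = 0$ and $W \in [\alpha,\beta]$, then $\E(e^{sW}) \le \exp\bigl(s^2(\beta-\alpha)^2/8\bigr)$. I would prove this by convexity of $w\mapsto e^{sw}$: for $w\in[\alpha,\beta]$ one has $e^{sw} \le \frac{\beta - w}{\beta-\alpha}e^{s\alpha} + \frac{w-\alpha}{\beta-\alpha}e^{s\beta}$, and taking expectations while using $\E(W)=0$ bounds $\E(e^{sW})$ by $e^{\psi(s)}$ for an explicit function $\psi$ (the logarithm of a two-point mixture of exponentials). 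A short calculus argument then shows $\psi(0) = \psi'(0) = 0$ and $\psi''(s) \le (\beta-\alpha)^2/4$ for all $s$ — the last inequality because $\psi''(s)$ is the variance of a Bernoulli-type law supported in $[\alpha,\beta]$ — so Taylor's theorem yields $\psi(s) \le s^2(\beta-\alpha)^2/8$. This is the only genuinely delicate part of the argument; everything surrounding it is routine bookkeeping.

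Combining, $\prod_{i=1}^m \E(e^{sW_i}) \le \exp\bigl(s^2 \sum_{i=1}^m c_i^2 / 8\bigr)$, hence $\proba(\overline{Z} - \mu \ge d) \le \exp\bigl(-smd + s^2\sum_{i=1}^m c_i^2 / 8\bigr)$. The exponent is a quadratic in $s$ minimized at $s = 4md / \sum_{i=1}^m c_i^2$, which gives the one-sided bound $\proba(\overline{Z} - \mu \ge d) \le \exp\bigl(-2m^2 d^2 / \sum_{i=1}^m c_i^2\bigr)$. Finally I would apply the same reasoning to the variables $-Z_i$, which have mean $-\mu$ and the same ranges $c_i$, to bound $\proba(\overline{Z} - \mu \le -d)$ by the identical quantity, and conclude via the union bound that $\proba(|\overline{Z} - \mu| \ge d) \le 2\exp\bigl(-2m^2 d^2 / \sum_{i=1}^m c_i^2\bigr)$, as stated.
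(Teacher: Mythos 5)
Your proof is correct: it is the canonical exponential-moment (Chernoff) argument, with Hoeffding's lemma $\E(e^{sW}) \le \exp\bigl(s^2(\beta-\alpha)^2/8\bigr)$ proved via convexity and the second-derivative/variance bound, the optimal choice $s = 4md/\sum_{i=1}^m c_i^2$, and a union bound over the two tails. The paper itself gives no proof of this lemma --- it is stated as a classical result and attributed to Hoeffding's 1963 paper --- so there is nothing to compare against; your write-up supplies exactly the standard argument that the citation points to, and all the constants check out.
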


We state and prove a series of lemmas leading to the main theorem and
its corollary. This is adapted from the work of Kushilevitz and
Mansour to the case of AOM functions instead of Boolean functions.

\begin{lemma}
  \label{lemma:B_prime}
  Let $m_1\in\naturals^*$ and $X_1$, $X_2$, ..., $X_{m_1}$ be
  independent random vectors with uniform distribution on
  $\binarySet^{n-k}$. Let
  $B_u' = (1 / m_1) \sum_{i=1}^{m_1} g_u^2(X_i)$. Then
  \begin{equation*}
    \proba(|B_u' - \E(g_u^2)| \ge 1 / (4n^2)) \le 2 \exp(-m_1 /
    (8n^4)) \,.
  \end{equation*}
\end{lemma}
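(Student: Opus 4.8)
The plan is to apply Hoeffding's inequality directly to the random variables $Z_i = g_u^2(X_i)$. First I would observe that by part 3 of the preceding lemma (the Kushilevitz--Mansour lemma), for every $x\in\binarySet^{n-k}$ we have $|g_u(x)|\le 1$, and therefore $0 \le g_u^2(X_i) \le 1$ for each $i$. So the $Z_i$ are independent, identically distributed, and bounded in $[0,1]$, giving $a_i = 0$, $b_i = 1$, and $c_i = 1$ for all $i$. Moreover, since the $X_i$ are uniform on $\binarySet^{n-k}$, we have $\E(Z_i) = \E(g_u^2(X_i)) = \E(g_u^2)$ for each $i$, so the common mean $\mu$ of Hoeffding's lemma is exactly $\E(g_u^2)$, and $\overline{Z}$ in Hoeffding's notation is precisely $B_u'$.

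Next I would simply substitute into Hoeffding's bound. With $m = m_1$, $d = 1/(4n^2)$, and $\sum_{i=1}^{m_1} c_i^2 = m_1$, the inequality gives
\[
  \proba\bigl(|B_u' - \E(g_u^2)| \ge 1/(4n^2)\bigr)
  \le 2\exp\!\left(-\frac{2 m_1^2 d^2}{\sum_{i=1}^{m_1} c_i^2}\right)
  = 2\exp\!\left(-\frac{2 m_1^2 \cdot 1/(16 n^4)}{m_1}\right)
  = 2\exp\!\left(-\frac{m_1}{8 n^4}\right),
\]
which is exactly the claimed bound.

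There is essentially no obstacle here: the lemma is a routine specialization of Hoeffding's inequality, and the only non-trivial input is the uniform bound $|g_u|\le 1$, which has already been established. The one point requiring a moment's care is matching the exponent arithmetic — in particular noting that $2 d^2 / 1 = 2/(16n^4) = 1/(8n^4)$ after the factor of $m_1$ cancels — but this is a one-line computation rather than a genuine difficulty.
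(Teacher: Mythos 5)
Your proof is correct and matches the paper's argument exactly: the paper likewise applies Hoeffding's inequality with $Z_i = g_u^2(X_i)$, $a_i=0$, $b_i=1$, $c_i=1$, $\mu=\E(g_u^2)$, and $d=1/(4n^2)$, relying on the bound $|g_u|\le 1$ from the preceding lemma. You have merely spelled out the exponent arithmetic that the paper leaves implicit.
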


\begin{proof}
  Apply Hoeffding's inequality with $m = m_1$, $Z_i = g_u^2(X_i)$,
  $\mu = \E(g_u^2)$, $a_i = 0$, $b_i = 1$, $c_i = 1$, and
  $d = 1 / (4n^2)$.
\end{proof}

\begin{lemma}
  \label{lemma:A_i}
  Let $m_1, m_2\in\naturals^*$ and, for all $i\in\range{1}{m_1}$, let
  $Y_{i,1}$, $Y_{i,2}$, ..., $Y_{i,m_2}$ be independent random vectors
  with uniform distribution on $\binarySet^k$. For all
  $x\in\binarySet^{n-k}$, let
  $A_i(x) = (1 / m_2) \sum_{j=1}^{m_2} g(Y_{i,j}x) \cdot
  \chi_u(Y_{i,j})$. Then
  \begin{equation*}
    \proba(|A_i(x) - g_u(x)| \ge 1 / (8n^2)) \le 2 \exp(-m_2 /
    (128n^4)) \,.
  \end{equation*}
\end{lemma}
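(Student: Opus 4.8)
The plan is to invoke Hoeffding's inequality, just as in the proof of Lemma~\ref{lemma:B_prime}, but applied this time to the inner average over $j$ rather than the outer average over $i$. Fix $i\in\range{1}{m_1}$ and $x\in\binarySet^{n-k}$, and for each $j\in\range{1}{m_2}$ set $Z_j = g(Y_{i,j}x)\cdot\chi_u(Y_{i,j})$, so that $A_i(x) = (1/m_2)\sum_{j=1}^{m_2} Z_j$.

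First I would check the hypotheses of Hoeffding's lemma. The vectors $Y_{i,1},\dots,Y_{i,m_2}$ are independent, hence so are $Z_1,\dots,Z_{m_2}$. Since $|g(x)|\le 1$ for all $x$ (property~1 of Lemma~\ref{lemma:sparsity}) and $|\chi_u|=1$ everywhere, we have $-1\le Z_j\le 1$, so one may take $a_j=-1$, $b_j=1$, and $c_j=b_j-a_j=2$. The common mean of the $Z_j$ is $\E(g(Yx)\cdot\chi_u(Y))$ with $Y$ uniform on $\binarySet^k$, which by the expectation representation $g_u(x) = \E(g(Yx)\cdot\chi_u(Y))$ stated in the lemma above equals $g_u(x)$; thus $\mu = g_u(x)$.

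Then I would apply Hoeffding with $m=m_2$ and $d=1/(8n^2)$. Since $\sum_{j=1}^{m_2} c_j^2 = 4m_2$, the bound becomes $2\exp(-2m_2^2 d^2/(4m_2)) = 2\exp(-m_2 d^2/2) = 2\exp(-m_2/(128n^4))$, which is exactly the claimed inequality. There is no real obstacle here; the only points deserving a moment's care are that the summands now take values in $[-1,1]$ rather than $[0,1]$, which doubles each $c_j$ compared with Lemma~\ref{lemma:B_prime} and accounts for the constant $128$, and that the identification of $\mu$ with $g_u(x)$ relies on the expectation representation of $g_u$ rather than on its defining Fourier sum.
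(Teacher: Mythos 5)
Your proof is correct and follows exactly the paper's argument: a direct application of Hoeffding's inequality with $m=m_2$, $Z_j = g(Y_{i,j}x)\cdot\chi_u(Y_{i,j})$, $\mu = g_u(x)$ (via the expectation representation of $g_u$), $c_j = 2$, and $d = 1/(8n^2)$. The extra verification of the constant $128$ and of the hypotheses is a welcome elaboration of what the paper leaves implicit.
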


\begin{proof}
  We apply Hoeffding's inequality with $m = m_2$,
  $Z_j = g(Y_{i,j}x) \cdot \chi_u(Y_{i,j})$, $\mu = g_u(x)$,
  $a_i = -1$, $b_i = 1$, $c_i = 2$, and $d = 1 / (8n^2)$.
\end{proof}

\begin{lemma}
  Let $B_u = (1 / m_1) \sum_{i=1}^{m_1} A_i^2(X_i)$. If, for all
  $i\in\range{1}{n}$ and all $x\in\binarySet^{n-k}$,
  $|A_i(x) - g_u(x)| < 1 / (8n^2)$ then $|B_u - B_u'| < 1 / (4n^2)$.
\end{lemma}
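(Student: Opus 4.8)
The plan is to establish the bound $|B_u - B_u'| < 1/(4n^2)$ by bounding the difference term by term, relating $A_i^2(X_i)$ to $g_u^2(X_i)$ for each $i$. First I would write the telescoping identity $A_i^2(X_i) - g_u^2(X_i) = (A_i(X_i) - g_u(X_i))(A_i(X_i) + g_u(X_i))$, so that
\begin{equation*}
  |B_u - B_u'| \le \frac{1}{m_1} \sum_{i=1}^{m_1} |A_i(X_i) - g_u(X_i)| \cdot |A_i(X_i) + g_u(X_i)| \,.
\end{equation*}
The first factor is controlled by hypothesis: it is strictly less than $1/(8n^2)$. For the second factor, I would use $|g_u(x)| \le 1$ (part 3 of the earlier lemma on $g_u$) together with the hypothesis, which gives $|A_i(X_i)| \le |g_u(X_i)| + 1/(8n^2) \le 1 + 1/(8n^2)$, hence $|A_i(X_i) + g_u(X_i)| \le 2 + 1/(8n^2)$.

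Putting these together, each summand is strictly less than $\frac{1}{8n^2}\left(2 + \frac{1}{8n^2}\right)$, and since the average over $i$ does not exceed the maximum summand, $|B_u - B_u'| < \frac{1}{8n^2}\left(2 + \frac{1}{8n^2}\right) = \frac{1}{4n^2} + \frac{1}{64n^4}$. This is slightly too large: the clean bound $1/(4n^2)$ requires a tiny bit more care, so I would instead use the sharper estimate $|g_u(x)| \le 1$ directly on one copy and the strict inequality on the other, writing $|A_i(X_i) + g_u(X_i)| \le |A_i(X_i) - g_u(X_i)| + 2|g_u(X_i)| < 1/(8n^2) + 2$. The extra slack can be absorbed by noting that for $n \ge 1$ the strict inequalities compound favorably, or — more robustly — the constants in Lemma~\ref{lemma:A_i} were chosen precisely so that the product of the two factors stays below $1/(4n^2)$; rechecking, $\frac{1}{8n^2} \cdot 2 = \frac{1}{4n^2}$ is already an equality with the dominant term, so strictness of $|A_i(X_i) - g_u(X_i)| < 1/(8n^2)$ must be leveraged, and one should bound $|A_i(X_i) + g_u(X_i)|$ by $2$ exactly. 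This forces using $|A_i(x)| \le 1$, which in turn follows because $A_i(x)$ is an average of terms $g(Y_{i,j}x)\chi_u(Y_{i,j})$ each of absolute value at most $1$ (since $|g| \le 1$ by part 1 of Lemma~\ref{lemma:sparsity} and $|\chi_u| = 1$). With $|A_i(X_i)| \le 1$ and $|g_u(X_i)| \le 1$ we get $|A_i(X_i) + g_u(X_i)| \le 2$, and then each summand is $< \frac{1}{8n^2}\cdot 2 = \frac{1}{4n^2}$, giving the result.

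The main obstacle is therefore not any deep argument but getting the constant exactly right: the naive bound $|A_i| \le 1 + 1/(8n^2)$ is too loose, and one must observe that $A_i(x)$, being a genuine average of values in $[-1,1]$, itself lies in $[-1,1]$. Once that observation is in place the rest is the elementary factorization of the difference of squares and the fact that an average is bounded by the largest term.
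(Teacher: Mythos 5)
Your proposal is correct and follows essentially the same route as the paper: factor $A_i^2 - g_u^2$ as a difference of squares, bound $|A_i + g_u| \le 2$ using $|g_u| \le 1$ and $|A_i| \le 1$, and combine with the hypothesis $|A_i - g_u| < 1/(8n^2)$. The paper simply asserts that $A_i$ is bounded in $[-1,1]$; your justification (that $A_i(x)$ is an average of terms $g(Y_{i,j}x)\chi_u(Y_{i,j})$, each of absolute value at most $1$) is the right one, and the detour through the looser bound $1 + 1/(8n^2)$ could be omitted.
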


\begin{proof}
  \begin{align*}
    B_u-B_u' &= (1/m_1) \sum_{i=1}^{m_1} (A_i^2(X_i) - g_u^2(X_i)) \\
             &= (1/m_1) \sum_{i=1}^{m_1} (A_i(X_i) - g_u(X_i)) \times (A_i(X_i) + g_u(X_i))
  \end{align*}
  Since both $g_u$ and $A_i$ are bounded in $[-1, 1]$, then
  \begin{equation*}
    |B_u-B_u'| \le (2/m_1) \sum_{i=1}^{m_1} |A_i(X_i) - g_u(X_i)| <
    1/(4n^2) \,.
  \end{equation*}
\end{proof}

\begin{lemma}
  \label{lemma:C_i}
  Let $m_3\in\naturals^*$ and $h = 2f - n$. For all $i\in\range{1}{n}$
  and all $u\in\hypercube$, let $X_{i,1}$, $X_{i,2}$, ..., $X_{i,m_3}$
  be independent random vectors with uniform distribution on
  $\binarySet^n$ and
  $C_i = (1 / m_3) \sum_{j=1}^{m_3} h(X_{i,j}) \cdot \chi_u(X_{i,j})$.
  Then
  $\proba(|C_i - \widehat{h}(u)| \ge 1) \le 2 \exp(-m_3 / (2n^2))$.
\end{lemma}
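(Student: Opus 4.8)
The plan is to apply Hoeffding's inequality in exactly the same manner as in the proofs of Lemmas~\ref{lemma:B_prime} and~\ref{lemma:A_i}, once the correct range and mean of the summands have been identified.

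First I would observe that $h = ng$, where $g = 2f/n - 1 \in \mathcal{G}$; indeed $h = 2f - n = n(2f/n - 1)$. By property~1 of Lemma~\ref{lemma:sparsity}, $|g(x)| \le 1$ for all $x\in\hypercube$, so $|h(x)| \le n$, and since $|\chi_u(x)| = 1$ this yields $-n \le h(x)\,\chi_u(x) \le n$ for every $x$. One can also see this directly: $f$ takes values in $\{0, 1, \ldots, n\}$, so $h = 2f - n$ takes values in $\{-n, -n+2, \ldots, n\}\subseteq [-n, n]$.

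Next I would note that the $X_{i,j}$ are independent and uniform on $\hypercube$, so the summands $Z_j = h(X_{i,j})\,\chi_u(X_{i,j})$ are i.i.d., and since $\chi_u$ is real valued their common mean is $\E(h\,\chi_u) = \innerProduct{h}{\chi_u} = \widehat{h}(u)$, which is precisely the quantity that $C_i$ estimates.

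Finally I would invoke Hoeffding's inequality with $m = m_3$, $Z_j$ as above, $\mu = \widehat{h}(u)$, $a_j = -n$, $b_j = n$, $c_j = 2n$, and $d = 1$. Then $\sum_{j=1}^{m_3} c_j^2 = 4n^2 m_3$, so the tail bound becomes $2\exp(-2m_3^2 d^2 / \sum_j c_j^2) = 2\exp(-m_3 / (2n^2))$, as claimed. I do not expect any genuine obstacle here; the only points requiring a little care are that the summands must be recognised as i.i.d.\ (this is where uniformity and independence of the $X_{i,j}$ enter) and that here the range $[-n, n]$ is used in place of the $[-1,1]$ range of the earlier lemmas, because $h$ is the unnormalised counterpart of $g$.
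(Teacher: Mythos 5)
Your proof is correct and follows exactly the paper's argument: the paper likewise applies Hoeffding's inequality with $Z_j = h(X_{i,j})\cdot\chi_u(X_{i,j})$, $\mu = \widehat{h}(u)$, $a_j=-n$, $b_j=n$, $c_j=2n$, and $d=1$. Your additional justification of the range $[-n,n]$ and of the identity $\E(h\chi_u)=\widehat{h}(u)$ is accurate and merely makes explicit what the paper leaves implicit.
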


\begin{proof}
  Apply Hoeffding's inequality with $m = m_3$,
  $Z_j = h(X_{i,j}) \cdot \chi_u(X_{i,j})$, $\mu = \widehat{h}(u)$,
  $a_i = -n$, $b_i = n$, $c_i = 2n$, and $d = 1$.
\end{proof}

\begin{theorem}
  Let $f\in\mathcal{F}$ and $\delta\in(0, 1)$. There is a randomized
  algorithm which exactly learns and maximizes $f$ with at most
  $m_1m_2n^2+m_3n$ evaluations and probability at least $1-\delta$,
  where $m_1 = \Theta\left(n^4 \log(n^2/\delta)\right)$,
  $m_2 = \Theta\left(n^4 \log(n^2m_1/\delta)\right)$,
  $m_3 = \Theta\left(n^2 \log(n/\delta)\right)$.
\end{theorem}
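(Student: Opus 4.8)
The plan is to simulate oracle access to $g = 2f/n - 1 \in \mathcal{G}$ (each evaluation of $g$, and later of $h = 2f - n$, costs one evaluation of $f$), run the Kushilevitz--Mansour algorithm of Alg.~\ref{algo:km} to recover the support of $\widehat g$, then recover the sign of each nonzero coefficient, and finally reconstruct $f$ and output its maximizer by linear algebra alone. By Lemma~\ref{lemma:sparsity}, $g$ has exactly $n$ nonzero Fourier coefficients, all of amplitude $1/n$, located at the rows $M^\intercal e_i$ of $M$, and since $M$ is invertible these $n$ vectors are nonzero and form a basis of $\hypercube$. For a prefix $u$, $\E(g_u^2)$ equals $1/n^2$ times the number of feature vectors lying below $u$ in the binary tree of depth $n$; in particular it is $0$ when no feature vector lies below $u$, and at least $1/n^2$ otherwise. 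The first step is to prove that, on a suitable high-probability event, every value $B_u$ computed by the algorithm satisfies $|B_u - \E(g_u^2)| < 1/(2n^2)$, so that the test $B_u > 1/(2n^2)$ correctly decides whether a feature vector lies below $u$; the depth-first traversal then visits exactly the prefixes of feature vectors together with their immediate children, and returns exactly the $n$ feature vectors. This bound follows by the triangle inequality from Lemma~\ref{lemma:B_prime} (controlling $|B_u' - \E(g_u^2)|$), Lemma~\ref{lemma:A_i} (controlling $|A_i(X_i) - g_u(X_i)|$), and the lemma stated above bounding $|B_u - B_u'|$.

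The delicate point is that the set of nodes visited by the algorithm is itself random, so a naive union bound would range over a tree of size $2^{n+1}$. I would instead argue along the traversal: as long as all sample averages computed so far are accurate, the algorithm has visited only the nodes it is \emph{supposed} to visit, namely the deterministically defined set consisting of all prefixes of feature vectors and their children. By the identity $\sum_{u\in\binarySet^k} \E(g_u^2) = \E(g^2)$ established above together with $\E(g^2) = 1/n$ (Lemma~\ref{lemma:sparsity}), at each of the $n$ levels at most $n$ prefixes satisfy $\E(g_u^2) \ge 1/n^2$; hence at most $O(n^2)$ prefixes are ever ``live'' and the intended set has size $O(n^2)$. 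It then suffices to union-bound over these $O(n^2)$ nodes: at each one, $|B_u' - \E(g_u^2)| \ge 1/(4n^2)$ has probability at most $2\exp(-m_1/(8n^4))$ by Lemma~\ref{lemma:B_prime}, and $|A_i(X_i) - g_u(X_i)| \ge 1/(8n^2)$ for some $i \le m_1$ has probability at most $2m_1\exp(-m_2/(128n^4))$ by Lemma~\ref{lemma:A_i} and a union bound over $i$. Taking $m_1 = \Theta(n^4\log(n^2/\delta))$ and $m_2 = \Theta(n^4\log(n^2m_1/\delta))$ makes the total of these contributions at most $\delta/3$.

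For the signs, with $h = 2f - n$ one has $\widehat h(M^\intercal e_i) = 2\widehat f(M^\intercal e_i) = -(-1)^{b_i} \in \{-1,+1\}$, so each $b_i$ is determined by the sign of $\widehat h$ at the corresponding feature vector. For each of the $n$ feature vectors found above I would compute one estimator $C$ as in Lemma~\ref{lemma:C_i}; since $|C - \widehat h(u)| < 1$ forces $C$ to share the sign of $\widehat h(u) = \pm 1$, a union bound over the $n$ feature vectors with $m_3 = \Theta(n^2\log(n/\delta))$ bounds the failure probability of this phase by $\delta/3$. On the intersection of the three favourable events, which has probability at least $1 - \delta$, the algorithm knows the $n$ rows of $M$ in some order and the matching entries of $b$, i.e.\ a pair $(PM, Pb)$ with $P$ a permutation matrix; since $\Phi(PM, Pb) = \Phi(M, b) = f$, it can output both the exact representation and the maximizer $x^* = (PM)^{-1}(1^n + Pb) = M^{-1}(1^n + b)$ without any further query. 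Each of the $O(n^2)$ recursive calls uses $m_1m_2$ evaluations and the sign phase uses $m_3$ evaluations per feature vector, giving $O(m_1m_2n^2 + m_3n)$ evaluations in all, which is the claimed bound.

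The only step that is not a routine assembly of the preceding lemmas is the adaptive union bound of the second paragraph: arguing that it suffices to control the sample averages at the $O(n^2)$ nodes the algorithm would visit if it never erred, and then tuning $m_1, m_2, m_3$ so that the three families of error events together have probability at most $\delta$. The triangle-inequality estimate for $|B_u - \E(g_u^2)|$, the sign recovery, and the computation of $x^*$ are direct, and the evaluation count is straightforward bookkeeping.
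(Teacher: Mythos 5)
Your proposal is correct and follows essentially the same route as the paper: simulate $g=2f/n-1$, run KM with the triangle-inequality combination of Lemmas~\ref{lemma:B_prime} and~\ref{lemma:A_i}, recover the signs via Lemma~\ref{lemma:C_i}, and count $O(n^2)$ recursive calls at $m_1m_2$ evaluations each plus $nm_3$ for the translation. Your explicit treatment of the adaptive union bound (restricting to the deterministic set of prefixes of feature vectors and their children) is a point the paper's proof passes over silently when it asserts ``at most $n^2$ recursive calls'' and allocates $\delta/(2n^2)$ per call, so your version is, if anything, slightly more careful on the same argument.
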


\begin{algorithm}
  \caption{Algorithm for $\mathcal{F}$}
  \label{algo:general}
  \begin{algorithmic}[1]
    \STATE $\text{Maximize}(f, m_1, m_2, m_3)$:
    \STATE $g\leftarrow 2f/n - 1$
    \STATE $U\leftarrow\text{KM}(g, m_1, m_2, \emptyset)$
    \STATE $h\leftarrow 2f - n$
    \FORALL{$i\in\range{1}{n}$}
    \FORALL{$j\in\range{1}{m_3}$}
    \STATE sample $x_{i,j}\in\binarySet^n$
    \ENDFOR
    \STATE $C_i \leftarrow (1 / m_3) \sum_{j=1}^{m_3} h(x_{i,j}) \chi_{u_i}(x_{i,j})$
    \IF{$C_i > 0$}
    \STATE $b_i\leftarrow 1$
    \ELSE
    \STATE $b_i\leftarrow 0$
    \ENDIF
    \ENDFOR
    \STATE $M\leftarrow (u_1, u_2, \ldots, u_n)^\intercal$
    \RETURN $M^{-1}(1^n + b)$
  \end{algorithmic}
\end{algorithm}

\begin{proof}
  Alg.~\ref{algo:general} first calls Alg.~\ref{algo:km} to determine
  all $n$ nonzero feature vectors of $f$, then estimates $b$ before
  returning the solution. We arbitrarily divide the error probability
  $\delta$ into $\delta / 2$ for KM and $\delta / 2$ for learning $b$.
  It is further divided into $\delta / (2n^2)$ per recursive call in
  KM and $\delta / (2n)$ per component of $b$. Finally,
  $\delta / (4n^2)$ is allocated to the computation of $B_u$ and
  $\delta / (4n^2m_1)$ to that of each $A_i$. With the probability
  bound in Lemma~\ref{lemma:B_prime}, the condition
  $2 \exp(-m_1 / (8n^4)) = \delta / (4n^2)$ gives
  $m_1 = \Theta\left(n^4 \log(n^2/\delta)\right)$. With the
  probability bound in Lemma~\ref{lemma:A_i}, the condition
  $2 \exp(-m_2 / (128n^4)) = \delta / (4n^2m_1)$ gives
  $m_2 = \Theta\left(n^4 \log(n^2m_1/\delta)\right)$. Finally, with
  the probability bound in Lemma~\ref{lemma:C_i}, the condition
  $2 \exp(-m_3 / (2n^2)) = \delta / (2n)$ gives
  $m_3 = \Theta\left(n^2 \log(n/\delta)\right)$. With probability at
  least $1-\delta$, every sample average is sufficiently concentrated
  around its mean. In this case, in every call to KM,
  $|B_u-\E(g_u^2)| \le |B_u-B_u'| + |B_u'-\E(g_u^2)| <
  1/(4n^2)+1/(4n^2) = 1/(2n^2)$, where $\E(g_u^2) = 0$ or
  $\E(g_u^2) \ge 1 / n^2$. Similarly, every inequality
  $|C_i - \widehat{h}(u)| < 1$ is satisfied, where
  $\widehat{h}(u) = -1$ or 1. As a consequence,
  Alg.~\ref{algo:general} correctly learns and maximizes $f$ with
  probability at least $1-\delta$. The number of recursive calls in KM
  is at most $n^2$ and each one of them requires $m_1m_2$ evaluations.
  Moreover, $b$ has $n$ components and each one of them requires $m_3$
  evaluations.
\end{proof}

\begin{corollary}
  \label{corollary:B_F}
  $B_{\mathcal{F}} = O(n^{10}\log^2 n)$.
\end{corollary}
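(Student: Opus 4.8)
The plan is to derive Corollary~\ref{corollary:B_F} directly from the preceding theorem by choosing the failure probability $\delta$ appropriately and collecting the resulting bound on the number of function evaluations. Recall that black box complexity is an expected-runtime notion, so I first need to turn the high-probability guarantee of the theorem into a bound on expectation, and then substitute the orders of $m_1$, $m_2$, $m_3$ into the evaluation count $m_1 m_2 n^2 + m_3 n$.

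Concretely, I would fix $\delta$ to be a constant, say $\delta = 1/2$. Then the theorem gives $m_1 = \Theta(n^4\log(n^2/\delta)) = \Theta(n^4\log n)$, and consequently $m_2 = \Theta(n^4\log(n^2 m_1/\delta)) = \Theta(n^4\log(n^2\cdot n^4\log n)) = \Theta(n^4\log n)$ as well (since $\log(n^6\log n) = \Theta(\log n)$), while $m_3 = \Theta(n^2\log(n/\delta)) = \Theta(n^2\log n)$. Plugging these into the evaluation count yields
\begin{equation*}
  m_1 m_2 n^2 + m_3 n = \Theta(n^4\log n)\cdot\Theta(n^4\log n)\cdot n^2 + \Theta(n^2\log n)\cdot n = \Theta(n^{10}\log^2 n) + \Theta(n^3\log n),
\end{equation*}
so the dominant term is $O(n^{10}\log^2 n)$ evaluations, and this holds with probability at least $1 - \delta = 1/2$.

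To conclude the bound on expectation, I would argue as follows: run Algorithm~\ref{algo:general} with $\delta = 1/2$ and budget $N = \Theta(n^{10}\log^2 n)$ evaluations; with probability at least $1/2$ it returns the true optimum $x^*$, which the algorithm can verify costlessly since $f(x^*) = n$ is the known maximum. If after $N$ evaluations the best point seen has value less than $n$, restart with fresh randomness. The number of restarts is stochastically dominated by a geometric random variable with success probability $1/2$, hence has expectation at most $2$, so the total expected number of evaluations is at most $2N + 1 = O(n^{10}\log^2 n)$ (the extra evaluation accounting for sampling an initial point, or folded into constants). Since this is achieved by a specific randomized algorithm over the class $\mathcal{F}$, it upper-bounds $B_{\mathcal{F}}$, giving $B_{\mathcal{F}} = O(n^{10}\log^2 n)$.

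The only mildly delicate point is the bookkeeping that $\log(n^2 m_1/\delta)$ collapses to $\Theta(\log n)$ once $m_1$ is itself polynomial in $n$ — this is routine but worth stating, since it is what keeps the exponent at $10$ rather than forcing an iterated logarithm or a larger power. Everything else is a direct substitution and the standard restart-until-success argument for converting a constant-probability algorithm with a known target value into an expected-runtime bound; no substantive new obstacle arises.
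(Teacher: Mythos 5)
Your proposal is correct and follows essentially the same route as the paper: set $\delta = 1/2$, observe that $m_1 m_2 = \Theta(n^8 \log^2 n)$ so the evaluation count is $\Theta(n^{10}\log^2 n)$, and convert the constant success probability into an expected-runtime bound by restarting until the known maximum value $n$ is attained, which costs only a factor $1/(1-\delta) = 2$ in expectation. The paper states this more tersely, but the substance, including the bookkeeping that $\log(n^2 m_1/\delta) = \Theta(\log n)$, is identical.
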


\begin{proof}
  We repeatedly run Alg.~\ref{algo:general} until a maximal point is
  found. The expected total number of evaluations is at most
  $(m_1m_2n^2+m_3n) / (1-\delta)$. Arbitrarily setting $\delta = 1/2$
  leads to $m_1m_2 = \Theta(n^8 \cdot \log^2n)$ and the announced
  upper bound.
\end{proof}

\begin{theorem}
  $B_{\mathcal{F}} = \Omega(n/\log n)$.
\end{theorem}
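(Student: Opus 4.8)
Reduction to a sub-class. The plan is the standard information-theoretic decision-tree argument, applied to a carefully chosen sub-class of $\mathcal{F}$, just as in the classical lower bound for OneMax with an unknown target. First I would restrict attention to the translation sub-class $\mathcal{F}' = \{\, x\mapsto \ell(x+b) : b\in\hypercube \,\}\subseteq\mathcal{F}$, obtained by taking $M = I_n\in\gl(n,\field)$. Since $\mathcal{F}'\subseteq\mathcal{F}$, for every randomized algorithm $A$ we have $\sup_{f\in\mathcal{F}}\E(T(A,f))\ge\sup_{f\in\mathcal{F}'}\E(T(A,f))$, so $B_{\mathcal{F}}\ge B_{\mathcal{F}'}$ and it suffices to prove $B_{\mathcal{F}'}=\Omega(n/\log n)$. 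Note that $\ell(x+b)$ is the Hamming distance between $x$ and $b$, that $|\mathcal{F}'| = 2^n$, and that $x\mapsto\ell(x+b)$ attains its maximum only at $\bar b = 1^n+b$, so the $2^n$ functions have pairwise distinct optima.

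Passing to deterministic algorithms and counting. By Yao's minimax principle (a randomized algorithm is a distribution over deterministic ones, and the worst case over $\mathcal{F}'$ dominates the average case over the uniform distribution on $b$), $B_{\mathcal{F}'}\ge\min_{A\ \mathrm{deterministic}}\E_b\bigl(T(A,f_b)\bigr)$, where $b$ is uniform on $\hypercube$ and $f_b=\ell(\cdot+b)$. Fix a deterministic $A$ and model its run by the usual query tree: the first query is fixed, and after any history of answers (each lying in $\range{0}{n}$) the next query is determined, so the number of distinct nodes reached within the first $t$ queries is at most $\sum_{j=1}^{t}(n+1)^{j-1}\le (n+1)^t$. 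For a given $b$, the run on $f_b$ traces one root-to-node path, and $A$ evaluates the optimum of $f_b$ within $t$ steps if and only if $\bar b$ equals the query label of one of the nodes on the depth-$\le t$ prefix of that path. Since each node carries a single query label, the node at which $A$ first evaluates $\bar b$ determines $b$ uniquely; hence the set $S_t$ of targets whose optimum $A$ reaches within $t$ evaluations satisfies $|S_t|\le (n+1)^t$.

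Concluding. Now choose $t=t(n)$ maximal with $(n+1)^t<2^{n-1}$, i.e.\ $t=\Theta(n/\log n)$. Then $|S_t|<2^{n-1}$, so $\proba_b(T(A,f_b)>t)>1/2$ and therefore $\E_b(T(A,f_b))>t/2=\Omega(n/\log n)$. As this holds for every deterministic $A$, we obtain $B_{\mathcal{F}'}=\Omega(n/\log n)$, and hence $B_{\mathcal{F}}=\Omega(n/\log n)$.

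Expected difficulty. I do not anticipate a genuine obstacle: the argument is entirely routine. The points that need care are (i) the computational model — ``runtime'' counts function evaluations and the algorithm must actually evaluate a maximum, which is the model used throughout the paper (cf.\ the proof of Corollary~\ref{corollary:B_F}); (ii) the bookkeeping that a deterministic algorithm's query tree has branching factor at most $n+1$ and that each node pins down at most one admissible target $b$; and (iii) invoking Yao's principle correctly to pass from randomized worst case to deterministic average case. One could alternatively run the counting over all of $\mathcal{F}$ using $|\mathcal{F}|\approx|\gl(n,\field)|\cdot 2^n/n!$, but the translation sub-class already yields the claimed bound with less effort.
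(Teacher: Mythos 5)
Your argument is correct, and at bottom it is the same information-theoretic lower bound as the paper's; the difference is that you re-derive it from first principles where the paper simply cites it. The paper's proof occupies three sentences: it observes that AOM functions take only the $n+1$ values in $\range{0}{n}$ and that every $x\in\hypercube$ is the unique maximizer of some $f\in\mathcal{F}$ (take any $M\in\gl(n,\field_2)$ and set $b=Mx+1^n$), and then invokes Theorem~2 of \cite{droste-2006}, which is precisely the Yao-plus-query-tree counting bound $\Omega(\log_{n+1}2^n)$ that you prove by hand. Your restriction to the translation sub-class $\{x\mapsto\ell(x+b):b\in\hypercube\}$ is a further, harmless deviation: the sub-class already realizes all $2^n$ optima, so nothing is lost, and the verification that distinct targets have distinct optima becomes immediate; the paper instead applies the cited theorem to all of $\mathcal{F}$, which requires the same surjectivity check in a mildly more general form. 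What your version buys is self-containedness and an explicit statement of the model assumptions (runtime counts queries until the optimum is first evaluated; the decision tree has branching factor $n+1$; each node can certify at most one target); what the paper's version buys is brevity. There is no gap in your argument.
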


\begin{proof}
  AOM functions take all $n+1$ values $0, 1, \ldots, n$. For each
  $x\in\hypercube$, there exists $f\in\mathcal{F}$ such that $x$ is
  its maximal point, which is equivalent to the fact that the set
  $\{(M, b) \in \gl(n, \field_2) \times \hypercube : Mx+b=1^n\}$ is
  not empty. For example, choose any $M\in\gl(n, \field_2)$ and set
  $b=Mx+1^n$. By Theorem~2 in \cite{droste-2006},
  $B_{\mathcal{F}} = \Omega(\log_n 2^n) = \Omega(n/\log n)$.
\end{proof}

Let us now study the black box complexity of TS-AOM functions with
small sequence length.
\begin{theorem}
  Let
  $\mathcal{F}_1^0 = \{ \ell \circ \tau : \tau\in G_1 \} \subset
  \mathcal{F}_1$. Then $B_{\mathcal{F}_1^0} = O(\log n)$.
\end{theorem}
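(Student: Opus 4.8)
The plan is to exhibit a deterministic procedure that identifies the transvection underlying $f$ from a logarithmic number of queries and then outputs its (already known) maximizer. The starting observation is an explicit formula for the members of $\mathcal{F}_1^0$: writing $f = \ell\circ\tau_{ij}$ and expanding the parity $x_i\oplus x_j = x_i+x_j-2x_ix_j$ over the reals gives $f(x) = \ell(x) + x_j - 2x_ix_j = \ell(x) + x_j(1-2x_i)$. Since the algorithm always knows $\ell(x)=\sum_k x_k$ for a queried point $x$, it can form the quantity $r(x) := f(x)-\ell(x) = x_j(1-2x_i)$, which equals $0$ when $x_j=0$, equals $+1$ when $x_j=1$ and $x_i=0$, and equals $-1$ when $x_j=x_i=1$. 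Moreover, because $\tau_{ij}$ is its own inverse and the translation part is trivial here, the unique maximizer of $f$ is $x^\ast = \tau_{ij}(1^n) = 1^n + e_i$, which depends only on $i$; hence it suffices to recover $i$ (finding $j$ along the way is a convenient intermediate goal).

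First I would locate $j$ by binary search: keep a candidate set $J\subseteq\range{1}{n}$, initially all of $\range{1}{n}$, split it into halves $J_0,J_1$, query the characteristic vector $x$ of $J_0$ (so $x_k=1$ iff $k\in J_0$), and use the fact that $r(x)\neq 0$ exactly when $j\in J_0$ to replace $J$ by $J_0$ or $J_1$; this drives $|J|$ to $1$ in $\lceil\log_2 n\rceil$ queries. Then I would locate $i$ by a second binary search on $I\subseteq\range{1}{n}\setminus\{j\}$: split $I$ into $I_0,I_1$ and query the characteristic vector $x$ of $I_0\cup\{j\}$. Since $j$ is in the support of $x$ we have $x_j=1$, so $r(x)\in\{-1,+1\}$, and $r(x)=-1$ iff $x_i=1$ iff $i\in I_0$ (here $i\neq j$ is what lets us conclude $i\in I_0$ rather than $i=j$); updating $I$ accordingly halves it each step and pins down $i$ in another $\lceil\log_2 n\rceil$ queries. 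Finally the algorithm queries $x^\ast = 1^n + e_i$, which is optimal, for a worst-case total of at most $2\lceil\log_2 n\rceil + 1 = O(\log n)$ evaluations over $f\in\mathcal{F}_1^0$. Since a deterministic algorithm is a special randomized one, this gives $B_{\mathcal{F}_1^0} = O(\log n)$.

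There is no genuinely hard step; the crux is simply recognizing that $f(x)-\ell(x)$ is computable from the query and takes only the three values $\{-1,0,1\}$, which turns identification of $(i,j)$ into two clean binary searches. The only points requiring care are the algebraic identity for $f$, the correctness of each search branch (in particular the use of $i\neq j$ above), and the trivial remark that the statement is only meaningful for $n\ge 2$ since $G_1=\emptyset$ otherwise. If one also wants the matching $\Omega(\log n)$ bound, it follows from the pairwise distinctness of the $n(n-1)$ functions $\ell\circ\tau_{ij}$ together with Theorem~2 in \cite{droste-2006}, but this is not needed for the stated claim.
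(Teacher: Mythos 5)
Your proof is correct and follows essentially the same route as the paper: two binary searches driven by the observable quantity $f(x)-\ell(x)$, the first locating the source index $j$ (the paper's test $f(\varepsilon)=\ell(\varepsilon)$ is exactly your $r(\varepsilon)=0$), the second locating the destination index $i$ with $j$ forced into the support of the query, followed by returning $1^n+e_i$. The explicit identity $r(x)=x_j(1-2x_i)$ is a nice compact way to justify both branch tests at once, but the algorithm and the $2\lceil\log_2 n\rceil+O(1)$ query count coincide with the paper's.
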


\begin{proof}
  The proof of the upper bound relies on a deterministic algorithm
  (see Alg.~\ref{algo:F_1_0}). Let $f = \ell\circ\tau_{ij}$ be in
  $\mathcal{F}_1^0$. For all $k\in\range{1}{n}\setminus \{j\}$,
  $\tau_{ij}(e_k) = e_k$ and $\tau_{ij}(e_j) = e_i+e_j$. The algorithm
  first calls $\text{BinarySearch}$ with $J=\emptyset$ and $c=0$ to
  find the source index $j$. At each stage, the candidate set $K$ is
  split into $K'$ and $K''$ of almost equal size, that is, the
  distance between $|K'|$ and $|K''|$ is at most 1. Let
  $\varepsilon = \sum_{k\in K'} e_k$. If $j\in K'$ then
  $\tau_{ij}(\varepsilon) = \varepsilon +e_i$ else $\varepsilon$. If
  $f(\varepsilon) = \ell(\varepsilon)$ then $j\in K''$ else $j\in K'$.
  The algorithm then calls $\text{BinarySearch}$ with $J=\{j\}$ and
  $c=1$ to find the destination index $i$. The candidate set $K$ does
  not contain $j$. It is split as before. Let
  $\varepsilon = \sum_{k\in K'} e_k + e_j$. We have
  $\tau_{ij}(\varepsilon) = \varepsilon +e_i$ and
  $f(\varepsilon) = \ell(\varepsilon +e_i)$. If
  $f(\varepsilon) = \ell(\varepsilon) + 1$ then $i\in K''$ else
  $i\in K'$. As a consequence, Alg.~\ref{algo:F_1_0} maximizes
  functions in $\mathcal{F}_1^0$ with at most $2\log_2 n$ evaluations.
\end{proof}

\begin{algorithm}
  \caption{Algorithm for $\mathcal{F}_1^0$}
  \label{algo:F_1_0}
  \begin{algorithmic}[1]
    \STATE $A_1^0(f)$:
    \STATE $j\leftarrow\text{BinarySearch}(f, \range{1}{n}, \emptyset, 0)$
    \STATE $i\leftarrow\text{BinarySearch}(f, \range{1}{n}\setminus\{j\}, \{j\}, 1)$
    \RETURN $1^n + e_i$
  \end{algorithmic}
\end{algorithm}

\begin{algorithm}
  \caption{Binary search for $\mathcal{F}_1^0$}
  \begin{algorithmic}[1]
    \STATE $\text{BinarySearch}(f, K, J, c)$:
    \IF{$|K|=1$}
    \RETURN $k\in K$
    \ENDIF
    \STATE Split $K$ into $K'$ and $K''$
    \STATE $\varepsilon \leftarrow \sum_{k\in K'\cup J} e_k$
    \IF{$f(\varepsilon) = \ell(\varepsilon) + c$}
    \RETURN $\text{BinarySearch}(f, K'', J, c)$
    \ELSE
    \RETURN $\text{BinarySearch}(f, K', J, c)$
    \ENDIF
  \end{algorithmic}
\end{algorithm}

\begin{theorem}
  $B_{\mathcal{F}_1} = O(n)$.
\end{theorem}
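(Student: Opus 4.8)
The plan is to extend the idea behind Alg.~\ref{algo:F_1_0} to the full class $\mathcal{F}_1$, where now $f(x) = \ell(\tau_{ij}x + b)$ for some unknown transvection $\tau_{ij}$ and unknown translation vector $b \in \hypercube$. The three unknowns to recover are the source index $j$, the destination index $i$, and the vector $b$; once these are known, the maximizer is $x^* = \tau_{ij}^{-1}(1^n + b) = \tau_{ij}(1^n + b)$, computable with no further evaluations. First I would recover $b$ directly: for each $k \in \range{1}{n}$, the value $f(e_k) - f(0)$ together with $f(0) = \ell(b)$ pins down enough information about $b$, but more simply, since $\ell(\tau_{ij}x+b)$ and $\ell(x+b)$ agree except when the $j$-th bit of $x$ is $1$, one can first learn $b$ by probing. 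Concretely $f(0) = \ell(b)$, and comparing $f(e_k)$ to $f(0)$ for each $k$ tells whether flipping coordinate $k$ (as seen through $\tau_{ij}$) increases or decreases the OneMax value; for $k \neq j$ this is exactly $b_k$, and the one anomalous index reveals $j$ simultaneously. This uses $n+1$ evaluations.

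More carefully, here is the cleaner decomposition I would carry out. Step 1: evaluate $f(0) = \ell(b)$ and $f(e_k)$ for all $k$. Since $\tau_{ij}e_k = e_k$ for $k \neq j$ and $\tau_{ij}e_j = e_i + e_j$, we get $f(e_k) = \ell(e_k + b)$ for $k \neq j$, which equals $\ell(b) + 1 - 2b_k$, so $b_k$ is determined; and $f(e_j) = \ell(e_i + e_j + b)$, whose deviation from the pattern $\ell(b)\pm 1$ identifies $j$ (the value changes by $0$ or $\pm 2$ from $\ell(b)$ depending on $b_i, b_j$). This recovers $j$ and all $b_k$ for $k \neq j$ in $n+1$ evaluations. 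Step 2: recover $b_j$ and $i$. Knowing $j$, evaluate $f(e_j + e_m)$ for coordinates $m \neq j$: this is $\ell(e_i + e_j + e_m + b)$, and its value relative to the now-known quantities isolates $b_i$ once we know $i$, or conversely a binary-search-style probe over candidate destination indices (using test vectors $\varepsilon = \sum_{k \in K'} e_k + e_j$ as in Alg.~\ref{algo:F_1_0}, which map under $\tau_{ij}$ to $\varepsilon + e_i$) locates $i$ in $O(\log n)$ evaluations. Finally $b_j$ follows from $f(e_j)$ and the now-known value of $b_i$. Step 3: output $\tau_{ij}(1^n + b)$.

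The total evaluation count is $(n+1) + O(\log n) = O(n)$, giving the bound. The main obstacle is bookkeeping the parity interactions in Step 2: because the flip at coordinate $j$ propagates to coordinate $i$, the observed OneMax values depend jointly on $b_i$ and $b_j$ (both initially unknown), so one must be careful to design the probe vectors so that the contribution of $b_i$ is either constant or cancels across the binary-search comparisons, leaving a clean signal for the location of $i$; alternatively one can first brute-force $b_j \in \{0,1\}$ by testing both hypotheses, which at most doubles the work and keeps the count $O(n)$. Either way the argument is elementary and the $O(n)$ bound is dominated by the initial sweep over the basis vectors in Step 1.
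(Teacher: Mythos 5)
Your plan is correct and follows essentially the same route as the paper's Alg.~\ref{algo:F_1}: probe $0$ and all $e_k$, identify $j$ as the unique index with an even increment $f(e_k)-f(0)$, read off $b_k$ for $k\neq j$ from the signs, then locate $i$ and recover $b_j$ with $O(n)$ further probes, returning $\tau_{ij}(1^n+b)$. The $b_i$/$b_j$ interaction you flag as the main obstacle is resolved in the paper by probing at $\varepsilon=\sum_{k\neq j}b_ke_k$, which cancels the known part of $b$ so that $f(\varepsilon)=b_j$ directly and $\beta_{j,k}-\beta_j=-1$ exactly when $k=i$.
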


\begin{proof}
  The upper bound is achieved by a deterministic algorithm (see
  Alg.~\ref{algo:F_1}). We will use the following property of OneMax.
  For all $x, y\in\hypercube$, the Hamming distance between $x$ and
  $y$ can be expressed as $\ell(x+y)$ and
  $\ell(x+y) \equiv \ell(x)-\ell(y) \pmod{2}$. Let
  $f = \ell \circ T_b \circ \tau_{ij}$ be in $\mathcal{F}_1$. For all
  $k\in\range{1}{n}\setminus \{j\}$, $\tau_{ij}(e_k) = e_k$ and
  $\tau_{ij}(e_j) = e_i+e_j$. The algorithm first determines $j$. We
  have
  $\alpha_0 = f(0) = (\ell \circ T_b \circ \tau_{ij})(0) = \ell(b)$
  and
  $\alpha_j = f(e_j) = (\ell \circ T_b \circ \tau_{ij})(e_j) = \ell(b
  + e_i + e_j)$. For all $k\neq j$,
  $\alpha_k = f(e_k) = (\ell \circ T_b \circ \tau_{ij})(e_k) = \ell(b
  + e_k)$. We have
  $\alpha_j - \alpha_0 \equiv \ell(e_i+e_j) \equiv 0 \pmod{2}$ and,
  for all $k\neq j$,
  $\alpha_k - \alpha_0 \equiv \ell(e_k) \equiv 1 \pmod{2}$. The
  algorithm then determines $b$. For all $k\neq j$, if $b_k = 0$ then
  $\alpha_k - \alpha_0 = \ell(b + e_k) - \ell(b) = 1$ else $-1$. Let
  $\varepsilon = \sum_{k\neq j} b_k e_k = b + b_je_j$. Using the fact
  that $\varepsilon$ is invariant under $\tau_{ij}$, we have
  $f(\varepsilon) = (\ell \circ T_b \circ \tau_{ij})(\varepsilon) =
  \ell(b + \varepsilon) = \ell(b_je_j) = b_j$. Finally, the algorithm
  determines $i$. We have
  $\beta_j = f(e_j + \varepsilon) = (\ell \circ T_b \circ
  \tau_{ij})(e_j + \varepsilon) = (\ell \circ T_b)(e_i + e_j +
  \varepsilon) = \ell(b + e_i + e_j + \varepsilon) = \ell(e_i +
  (1+b_j)e_j)$. Similarly, for all $k\neq j$,
  $\beta_{j,k} = \ell(e_i + (1+b_j)e_j + e_k)$. If $k=i$ then
  $\beta_{j,k} - \beta_j = -1$ else $1$. The total number of
  evaluations is equal to $1 + n + 2 + (n-1) = 2(n+1)$, which implies
  $B_{\mathcal{F}_1} = O(n)$.
\end{proof}

\begin{algorithm}
  \caption{Algorithm for $\mathcal{F}_1$}
  \label{algo:F_1}
  \begin{algorithmic}[1]
    \STATE $A_1(f)$:
    \STATE $\alpha_0 \leftarrow f(0)$
    \FORALL{$k\in\range{1}{n}$}
    \STATE $\alpha_k \leftarrow f(e_k)$
    \ENDFOR
    \STATE $j \in \{k : \alpha_k - \alpha_0 \equiv 0 \pmod{2}\}$ \label{line:j}
    \FORALL{$k\in\range{1}{n} \setminus \{j\}$}
    \IF{$\alpha_k - \alpha_0 = 1$}
    \STATE $b_k \leftarrow 0$
    \ELSE
    \STATE $b_k \leftarrow 1$
    \ENDIF
    \ENDFOR
    \STATE $\varepsilon \leftarrow \sum_{k\neq j} b_k e_k$
    \STATE $b_j \leftarrow f(\varepsilon)$
    \STATE $\beta_j \leftarrow f(e_j + \varepsilon)$
    \FORALL{$k\in\range{1}{n} \setminus \{j\}$}
    \STATE $\beta_{j,k} \leftarrow f(e_j + e_k + \varepsilon)$
    \ENDFOR
    \STATE $i \in \{k\neq j : \beta_{j,k} - \beta_j = -1\}$ \label{line:i}
    \RETURN $\tau_{ij}(1^n+b)$
  \end{algorithmic}
\end{algorithm}

\begin{theorem}
  For all positive integer $t$, if $t\le 5$ then
  $B_{\mathcal{F}_t} = O(n^{2t-1})$ else $O(n^{10}\log^2 n)$.
\end{theorem}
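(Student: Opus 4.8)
The plan is to split by the value of $t$. For $t\ge 6$ there is nothing to do beyond the general case: $\mathcal{F}_t\subseteq\mathcal{F}$, so by Corollary~\ref{corollary:B_F} we already have $B_{\mathcal{F}_t}\le B_{\mathcal{F}}=O(n^{10}\log^2 n)$, which is exactly the claimed bound since then $2t-1\ge 11>10$. So the only real content is the bound $B_{\mathcal{F}_t}=O(n^{2t-1})$, which I would establish for every positive $t$ (it simply stops being an improvement on the general bound once $t\ge 6$).

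To get $O(n^{2t-1})$ I would build a deterministic \emph{peeling} algorithm $A_t$ by recursion on $t$, with base case the algorithm $A_1$ of Alg.~\ref{algo:F_1} (exact on $\mathcal{F}_1$, $O(n)$ evaluations). The structural observation is that $G_t=G_1\cdot G_{t-1}$ in the strong pointwise sense: if $f(x)=\ell(Mx+b)$ with $M=\tau_{i_1j_1}\cdots\tau_{i_tj_t}$, then, writing $\tau=\tau_{i_tj_t}$ and using $\tau^2=I_n$, we get $M\tau=\tau_{i_1j_1}\cdots\tau_{i_{t-1}j_{t-1}}\in G_{t-1}$, hence $f\circ\tau$ is the function $x\mapsto\ell((M\tau)x+b)$, a member of $\mathcal{F}_{t-1}$. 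Two easy facts make this usable as an oracle reduction: one evaluation of $f\circ\tau$ at $x$ costs exactly one evaluation of $f$ (at the known point $\tau(x)$), and if $x'$ maximises $f\circ\tau$ then $\tau(x')$ maximises $f$, since $f(\tau(x'))=(f\circ\tau)(x')=n$. Accordingly $A_t$ enumerates all $n(n-1)$ elementary transvections $\tau$; for each it runs $A_{t-1}$ against the oracle $x\mapsto f(\tau(x))$ to obtain a candidate $x'$, evaluates $f(\tau(x'))$, and returns $\tau(x')$ as soon as this value equals $n$.

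Correctness is an induction: $A_1$ is exact; for $t\ge 2$, when $\tau$ is the genuine last transvection of $M$ the oracle lies in $\mathcal{F}_{t-1}$, so the inductive hypothesis guarantees $A_{t-1}$ returns its true maximiser, which passes the test; and the explicit test $f(\tau(x'))=n$ never accepts a non-optimal point, so every value returned is the optimum of $f$. For the cost, let $Q_t$ be the worst-case number of evaluations of $A_t$; since $A_{t-1}$ is a fixed terminating procedure it uses at most $Q_{t-1}$ evaluations on \emph{any} oracle, so $Q_t\le n(n-1)\,(Q_{t-1}+1)$. With $Q_1=O(n)$ this unwinds to $Q_t=O(n^{2(t-1)})\cdot O(n)=O(n^{2t-1})$, the verification queries contributing only a lower-order term. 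As $A_t$ is deterministic with this uniform bound over $\mathcal{F}_t$, we get $B_{\mathcal{F}_t}=O(n^{2t-1})$; together with the first paragraph this is the theorem.

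The delicate point, which the correctness argument must handle explicitly, is that for the $\Theta(n^2)$ \emph{wrong} guesses of $\tau$ the oracle $x\mapsto f(\tau(x))$ need not belong to $\mathcal{F}_{t-1}$ at all, so $A_{t-1}$ may behave arbitrarily on it; one must check (i) that it nonetheless halts within its query budget --- which holds because $A_{t-1}$ is a finite nest of bounded loops ending in Alg.~\ref{algo:F_1} --- and (ii) that its possibly meaningless output is harmless because the single verification evaluation $f(\tau(x'))$ filters it out. Everything else is routine bookkeeping of the recursion.
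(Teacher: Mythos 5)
Your proposal is correct and is essentially the paper's argument: the paper also handles $t>5$ by the general bound of Corollary~\ref{corollary:B_F}, and for the $O(n^{2t-1})$ bound it enumerates all $(n(n-1))^{t-1}$ products $P$ of $t-1$ transvections, applies $A_1$ to $f\circ P$ (using $MP=\tau_1$ via $\tau^2=I_n$, the same peeling identity you use), and verifies candidates until the maximum is found. Your recursive formulation of the enumeration and your explicit treatment of the verification queries and of $A_{t-1}$'s behaviour on oracles outside $\mathcal{F}_{t-1}$ are only presentational refinements of the same idea (the paper instead stipulates that $A_1$ returns an arbitrary solution when $f\notin\mathcal{F}_1$), and they yield the identical cost bound $(n(n-1))^{t-1}\cdot O(n)=O(n^{2t-1})$.
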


\begin{proof}
  Assume Alg.~\ref{algo:F_1} has been modified so as to return an
  arbitrary solution if $f\notin\mathcal{F}_1$. Assume $t>1$ and let
  $f\in\mathcal{F}_t$. Then $f$ can be written
  $\ell \circ T_b \circ M$, where $M = \Pi_{k=1}^t \tau_k$. Let
  $P = \Pi_{k=t}^2 \tau_k$. Since $MP = \tau_1$, we have
  $f\circ P\in\mathcal{F}_1$. Let $A_t$ be the algorithm which
  enumerates all products $P$ of $t-1$ transvections and applies $A_1$
  on each $f\circ P$ until a solution is found. Since the number of
  transvections is $n(n-1)$, we have
  $B_{\mathcal{F}_t} \le (n(n-1))^{t-1} \times B_{\mathcal{F}_1} =
  O(n^{2t-1})$. If $t>5$ then the general bound $O(n^{10}\log^2 n)$
  given by Corollary~\ref{corollary:B_F} is better than $O(n^{2t-1})$.
\end{proof}

\begin{theorem}
  For all positive integer $t \le n-1$,
  $B_{\mathcal{F}_t^\delta} \le n + t(\log_2(n-t) + 1)$ and
  $B_{\mathcal{F}_t^\delta} = O(n\log n)$.
\end{theorem}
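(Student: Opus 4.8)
The plan is to exhibit a deterministic algorithm that maximizes any $f\in\mathcal{F}_t^\delta$ within the stated budget; the bound $O(n\log n)$ is then immediate, since $t\le n-1$ gives $n+t(\log_2(n-t)+1)\le n+(n-1)(\log_2 n+1)$. Write $f=\ell\circ T_b\circ M$ with $M=\prod_{k=1}^t\tau_{i_kj_k}$, where the transvections pairwise commute and the source indices are distinct; set $J=\{j_1,\dots,j_t\}$, $\bar J=\range{1}{n}\setminus J$, and let $\delta:J\to\bar J$ send each source to its destination. The unknowns to be learned are $J$, $\delta$ and $b$; once they are known the algorithm outputs $M^{-1}(1^n+b)$, which is the maximizer and costs no evaluation (note $M^{-1}=M$, since commuting transvections are involutions). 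Everything rests on two identities coming from the commuting/distinct-source structure: $Me_j=e_j+e_{\delta(j)}$ for $j\in J$, and $Mw=w$ whenever $w$ is supported on $\bar J$. Hence, for such $w$, $f(w)=\ell(w+b)$ and $f(w+e_j)=\ell(w+b+e_j+e_{\delta(j)})$, so $f(w+e_j)-f(w)$ is the sum of the two $\pm1$ weight changes obtained by flipping coordinates $j$ and $\delta(j)$ of $w+b$.

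First phase (about $n$ evaluations): query $f(0)$ and $f(e_k)$ for $k\in\range{1}{n}$. For $k\in\bar J$, $f(e_k)-f(0)=\pm1$ (odd) and its sign gives $b_k$; for $j\in J$, $f(e_j)-f(0)=2-2(b_j+b_{\delta(j)})$ (even). So the parity of $f(e_k)-f(0)$ reveals $J$, the signs reveal $b$ on $\bar J$, and then $f(0)$ reveals the weight $\ell(b|_J)$. Moreover a source with $f(e_j)\ne f(0)$ has its bit $b_j\in\{0,1\}$ already determined, while a source with $f(e_j)=f(0)$ satisfies $b_j=\overline{b_{\delta(j)}}$, so that bit becomes free once $\delta(j)$ is located. (In the borderline case $t=n/2$, where the two parity classes have equal size, it is $f(0)$ that tells us which class is $J$.)

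Second phase (about $t\log_2(n-t)$ evaluations): for each source $j$, locate $\delta(j)$ by binary search over $\bar J$. Given a candidate set $K'\subseteq\bar J$, let $w$ agree with $\overline{b}$ on $K'$, with $b$ on $\bar J\setminus K'$, and with $0$ on $J$; then $f(w)=|K'|+\ell(b|_J)$ is already known, and the identity above gives $f(w+e_j)-f(w)=(1-2b_j)+(1-2\cdot[\delta(j)\in K'])$, so a single evaluation of $f(w+e_j)$ decides whether $\delta(j)\in K'$. A source whose $b_j$ is not yet known is started with $K'=\emptyset$, which reads off $b_j$ at the cost of one step; the first-phase sign relation $b_{\delta(j)}=(b_j+b_{\delta(j)})-b_j$ further restricts some of the searches; and the last such source needs no query because $\ell(b|_J)$ together with the bits already found determines it. Summing the two phases and handling the integer ceilings carefully yields $n+t(\log_2(n-t)+1)$.

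The step I expect to be the main obstacle is exactly this last piece of bookkeeping: a naive "one binary search per $\delta(j)$ plus one read-off per unknown $b_j$" overshoots by an additive $\Theta(t)$ when $n-t$ is far from a power of two, so the argument must be tightened by reusing $f(0)$, the known values $b|_{\bar J}$ and $\ell(b|_J)$, and the first-phase parities, so that the amortized cost per source is exactly $\log_2(n-t)+1$. By contrast, the design of the phase-two probes is the easy part: the commuting/distinct-source hypothesis is precisely what makes $f(w+e_j)$ depend on $M$ only through the single bit $[\delta(j)\in K']$ once $b_j$ is known, and the same hypothesis makes $M^{-1}(1^n+b)=M(1^n+b)$, so the final output needs no evaluation and its correctness is immediate.
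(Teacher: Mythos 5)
Your proposal is correct and essentially identical to the paper's proof: the same first phase (query $f(0)$ and all $f(e_k)$, read off $J$ from the parity of $f(e_k)-f(0)$ and $b$ on $\range{1}{n}\setminus J$ from its sign), followed, for each source $j$, by one probe to determine $b_j$ and a binary search over the destination candidates using probes that flip a candidate subset of the coordinates outside $J$. The only difference is your extra bookkeeping to absorb the queries $f(0)$ and $f(\varepsilon)$ into the stated budget; the paper's own count is $n + t(\log_2(n-t)+1) + 2$ and it does not address this additive constant (nor the ceilings in the binary search), so your tightening is, if anything, more careful than the original.
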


\begin{proof}
  The bound is achieved by a deterministic algorithm (see
  Alg.~\ref{algo:F_t_delta}) similar to Alg.~\ref{algo:F_1}. Let
  $f = \ell \circ T_b \circ M$ be in $\mathcal{F}_t^\delta$, where
  $M = \prod_{j\in J} \tau_{\delta(j)j}$, $J$ is the set of source
  indices, $t = |J|$, and, for all $j\in J$, $\delta(j)$ is the
  destination index of transvection $\tau_{\delta(j)j}$. Recall that
  $J$ and $\delta(J)$ are disjoint. The algorithm first determines
  $J$. If $k\notin J$ then $e_k$ is invariant under $M$,
  $f(e_k) = \ell(b+e_k)$, and
  $f(e_k)-f(0) \equiv \ell(e_k) \equiv 1 \pmod{2}$. If $k\in J$ then
  $Me_k = e_k + e_{\delta(k)}$, $f(e_k) = \ell(b+e_k+e_{\delta(k)})$,
  and $f(e_k)-f(0) \equiv \ell(e_k+e_{\delta(k)}) \equiv 0 \pmod{2}$.
  Let $K = \range{1}{n} \setminus J$ be the candidate set for
  destination indices. The algorithm then determines $b_k$ for all
  $k\in K$ as Alg.~\ref{algo:F_1} does. Let
  $\varepsilon = \sum_{k\in K} b_k e_k$. Observe that $\varepsilon$ is
  invariant under $M$ and $\varepsilon+b = \sum_{k\in J} b_k e_k$. The
  algorithm then determines $b_j$ for all $j\in J$. Let $j\in J$ and
  $i = \delta(j)$. We have
  $\beta_0 = f(\varepsilon) = \ell(\varepsilon+b) = \sum_{k\in J} b_k
  = \sum_{k\in J\setminus\{j\}} b_k + b_j$. We also have
  $\beta_j = f(\varepsilon+e_j) = \ell(\varepsilon+b+e_j+e_i) =
  \sum_{k\in J\setminus\{j\}} b_k + (1-b_j) + 1$. If $b_j = 0$ then
  $\beta_j-\beta_0 = 2$ else $0$. The algorithm then determines
  $i = \delta(j)$ by means of binary search as in
  Alg.~\ref{algo:F_1_0}. We have
  $\beta_j = \ell(\varepsilon+b+e_j+e_i) = \ell(\varepsilon+b+e_j) +
  \ell(e_i)$. At each stage, the candidate set $L\subset K$ is split
  into $L'$ and $L''$ of almost equal size. Let
  $\lambda = \sum_{k\in L'} e_k$, which is invariant under $M$, and
  $\beta_{j,\lambda} = f(\varepsilon + e_j + \lambda)$. We have
  $\beta_{j,\lambda} = \ell(\varepsilon+b+e_j) + \ell(e_i+\lambda)$.
  If $i\in L'$ then $\beta_{j,\lambda} - \beta_j = \ell(\lambda) - 2$
  else $\ell(\lambda)$. The total number of evaluations is equal to
  $1 + n + 1 + t \times(1 + \log_2(n-t)) = n + t(\log_2(n-t) + 1) + 2
  \le n + n(\log_2 n + 1) + 2 = n\log_2 n + 2n + 2 = O(n\log n)$.
  Alg.~\ref{algo:F_t_delta} does not apply to $\mathcal{F}_t^c$ or
  $\mathcal{F}_t^\sigma$ since it cannot identify the set $J$ of
  source indices if the same source appears more than once in the
  transvection sequence.
\end{proof}

\begin{algorithm}
  \caption{Algorithm for $\mathcal{F}_t^\delta$}
  \label{algo:F_t_delta}
  \begin{algorithmic}[1]
    \STATE $A_t^\delta(f)$:
    \STATE $\alpha_0 \leftarrow f(0)$
    \FORALL{$k\in\range{1}{n}$}
    \STATE $\alpha_k \leftarrow f(e_k)$
    \ENDFOR
    \STATE $J = \{k : \alpha_k - \alpha_0 \equiv 0 \pmod{2}\}$
    \STATE $K = \range{1}{n} \setminus J$
    \FORALL{$k\in K$}
    \IF{$\alpha_k - \alpha_0 = 1$}
    \STATE $b_k \leftarrow 0$
    \ELSE
    \STATE $b_k \leftarrow 1$
    \ENDIF
    \ENDFOR
    \STATE $\varepsilon \leftarrow \sum_{k\in K} b_k e_k$
    \STATE $\beta_0\leftarrow f(\varepsilon)$
    \FORALL{$j\in J$}
    \STATE $\beta_j \leftarrow f(\varepsilon+e_j)$
    \IF{$\beta_j-\beta_0 = 2$}
    \STATE $b_j \leftarrow 0$
    \ELSE
    \STATE $b_j \leftarrow 1$
    \ENDIF
    \STATE $\delta(j)\leftarrow\text{BinarySearch}(f, \varepsilon, j, \beta_j, K)$
    \ENDFOR
    \RETURN $\prod_{j\in J}\tau_{\delta(j)j}(1^n+b)$
  \end{algorithmic}
\end{algorithm}

\begin{algorithm}
  \caption{Binary search for $\mathcal{F}_t^\delta$}
  \begin{algorithmic}[1]
    \STATE $\text{BinarySearch}(f, \varepsilon, j, \beta_j, L)$:
    \IF{$|L|=1$}
    \RETURN $k\in L$
    \ENDIF
    \STATE Split $L$ into $L'$ and $L''$
    \STATE $\lambda \leftarrow \sum_{k\in L'} e_k$
    \STATE $\beta_{j,\lambda} \leftarrow f(\varepsilon + e_j + \lambda)$
    \IF{$\beta_{j,\lambda} - \beta_j = \ell(\lambda)$}
    \RETURN $\text{BinarySearch}(f, \varepsilon, j, \beta_j, L'')$
    \ELSE
    \RETURN $\text{BinarySearch}(f, \varepsilon, j, \beta_j, L')$
    \ENDIF
  \end{algorithmic}
\end{algorithm}

\section{Experiments}
\label{sec:experiments}

We have applied the following search algorithms to random instances of
AOM and TS-AOM functions: random search (RS), random local search with
restart (RLS), hill climbing with restart (HC), simulated annealing
(SA) \cite{kirkpatrick-gelatt-vecchi-1983}, genetic algorithm (GA)
\cite{holland-75}, $(1+1)$ evolutionary algorithm (EA), $(10+1)$
evolutionary algorithm, population-based incremental learning (PBIL)
\cite{baluja-caruana-95}, mutual information maximization for input
clustering (MIMIC) \cite{isbell}, univariate marginal distribution
algorithm (UMDA) \cite{muhlenbein-97}, hierarchical Bayesian
optimization algorithm (HBOA) \cite{pelikan06:_hierar_bayes}, linkage
tree genetic algorithm (LTGA) \cite{thierens-2010}, parameter-less
population pyramid (P3) \cite{goldman-punch-2015}. All experiments
have been produced with the HNCO framework \cite{hnco}.

Fig.~\ref{fig:aom_benchmark} shows the fixed-budget performance of
search algorithms on the same AOM function which has been sampled as
described in Sec.~\ref{sec:general-case}. Its maximum is equal to
$n = 100$. Median function values are not greater than 73. Results
across algorithms are similar and would not significantly change if
algorithms were run on another random instance. It should be noted
that random search performs as well as other algorithms. These results
do not come as a surprise since a random invertible matrix with
uniform distribution is most likely very different from the identity.
As we will see in the following experiments, TS-AOM functions with
small sequence length are the easiest AOM functions.

\begin{figure}
  \centering
  \includegraphics[width=0.8\linewidth]{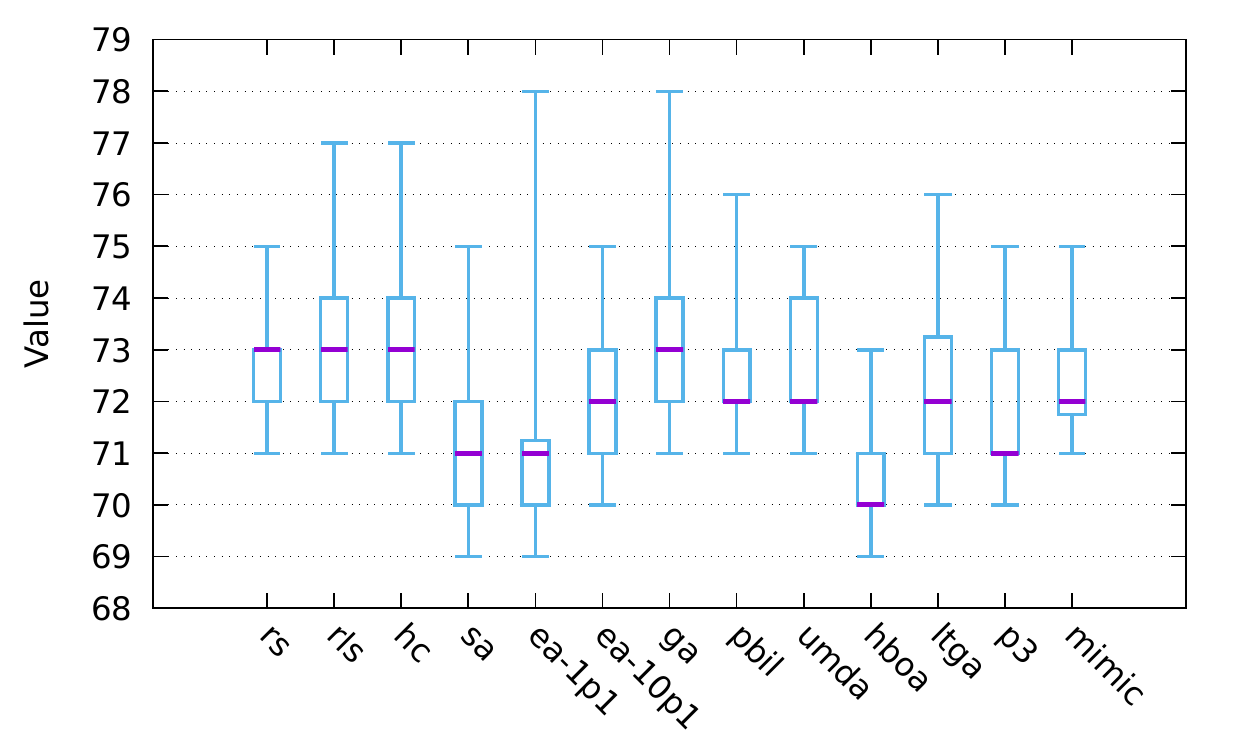}
  \caption{Fixed-budget performance of some search algorithms on the
    same AOM function ($n = 100$, $3\cdot 10^5$ evaluations, 20 runs).}
  \label{fig:aom_benchmark}
\end{figure}

\begin{figure}
  \centering
  \includegraphics[width=0.8\linewidth]{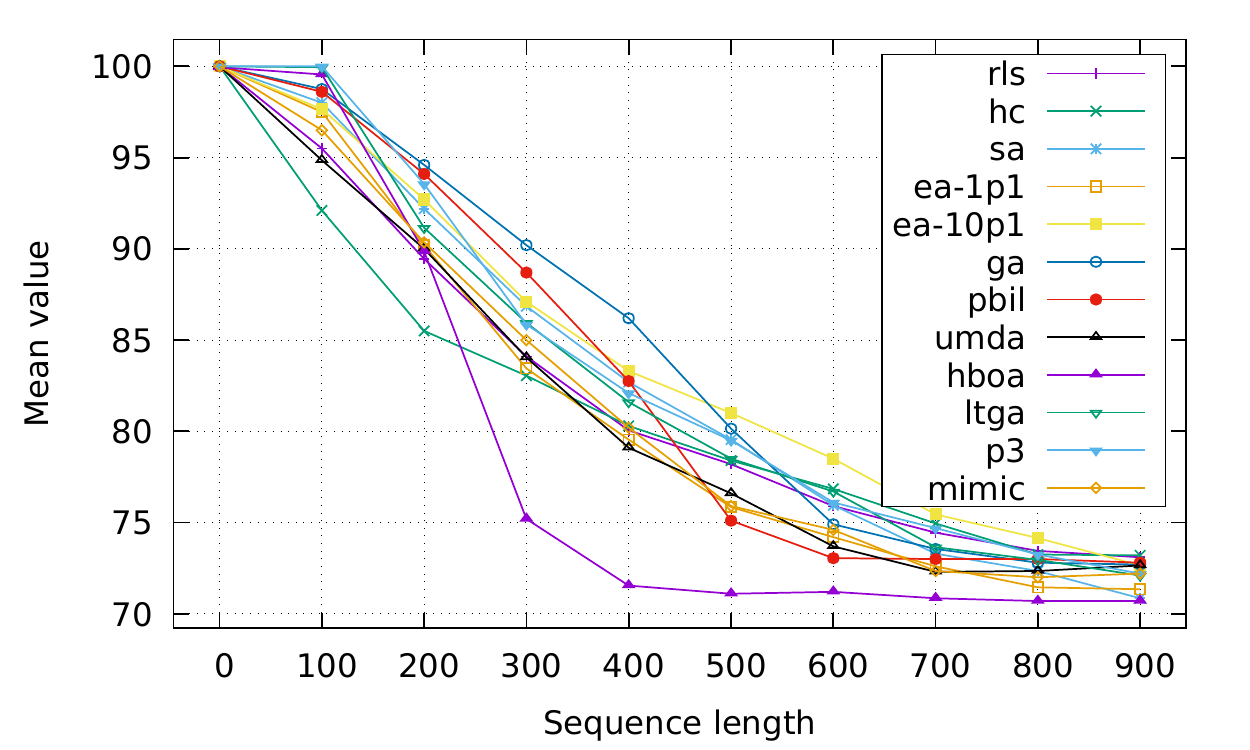}
  \caption{Mean fixed-budget performance of search algorithms on
    TS-AOM functions as a function of $t\in\range{0}{900}$ ($n = 100$,
    $3\cdot 10^5$ evaluations, 20 runs).}
  \label{fig:ts_aom_unconstrained_function_parameter_global}
\end{figure}

Fig.~\ref{fig:ts_aom_unconstrained_function_parameter_global} shows
the mean fixed-budget performance of search algorithms on TS-AOM
functions as a function of sequence length $t\in\range{0}{900}$. For
each run, a random TS-AOM function has been generated. The randomness
of the performance stems from the random generation of the function
and the stochastic behavior of algorithms. This will also be the case
for subsequent experiments. For all search algorithms, as $t$
increases, the mean value converges to the mean value observed in the
general case (see Fig.~\ref{fig:aom_benchmark}). This is consistent
with the theory of random walks on finite groups
\cite{saloff-coste-2004}. It should be noted that some results
\cite{hildebrand-1992} only apply to general transvections, as opposed
to elementary transvections used in this paper. Finite products of
random transvections can be seen as random walks on
$\gl(n, \field_2)$. As $t\rightarrow \infty$, their distributions
converge to the uniform distribution on $\gl(n, \field_2)$. For large
$t$, the distribution of random TS-AOM functions is close to the
uniform distribution on $\mathcal{F}$, which explains the observed
asymptotic behavior. For $t\in\range{20}{200}$, hill climbing is
significantly worse than other algorithms. For $t\in\range{200}{300}$,
the mean value of HBOA drops faster than that of other algorithms. For
$t\in\range{200}{460}$, the genetic algorithm outperforms other
algorithms. For $t\in\range{500}{800}$, $(10+1)$ EA outperforms other
algorithms.

\begin{figure}
  \centering
  \includegraphics[width=0.8\linewidth]{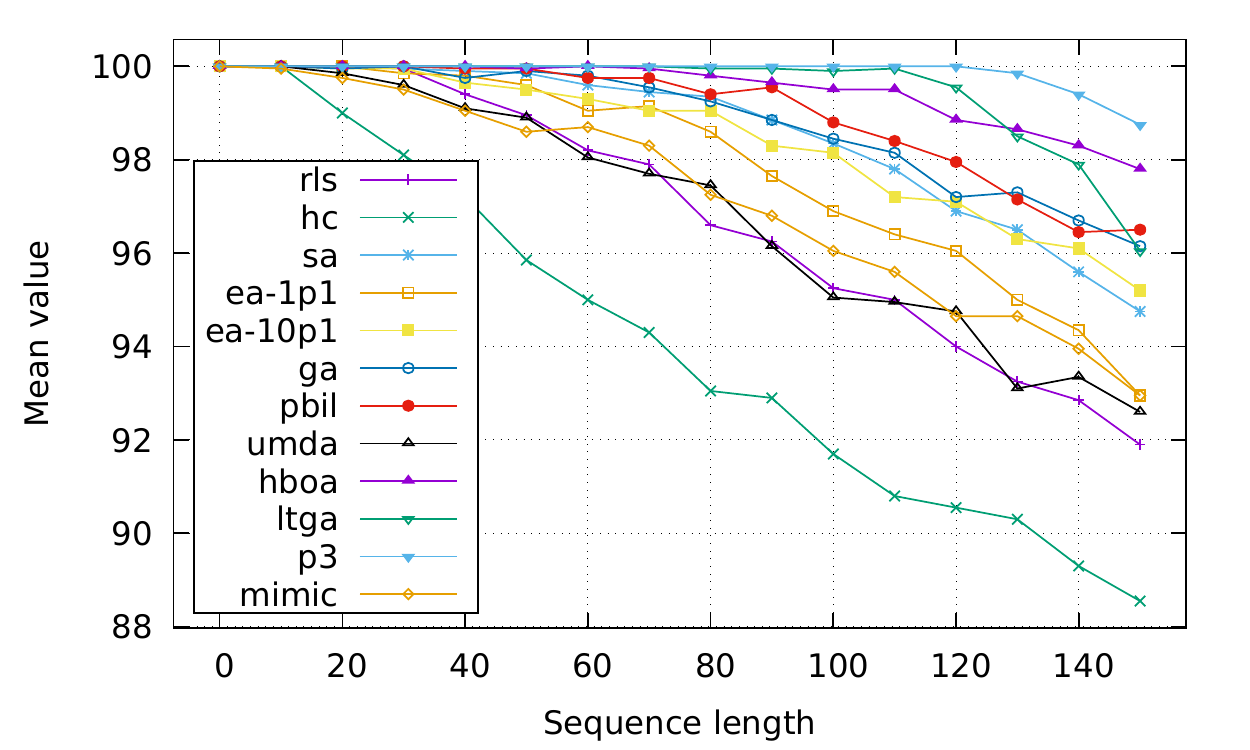}
  \caption{Mean fixed-budget performance of search algorithms on
    TS-AOM functions as a function of $t\in\range{0}{150}$ ($n = 100$,
    $3\cdot 10^5$ evaluations, 20 runs).}
  \label{fig:ts_aom_unconstrained_function_parameter_beginning}
\end{figure}

In Fig.~\ref{fig:ts_aom_unconstrained_function_parameter_beginning},
we turn our attention to TS-AOM functions with small sequence length,
that is those not too far from OneMax. The plot shows how search
algorithms resist an increasing number of perturbations (elementary
transvections). MIMIC is the first algorithm to fail to maximize an
instance (at $t=10$) whereas P3 is the last one (at $t=130$).

\begin{figure}
  \centering
  \includegraphics[width=0.8\linewidth]{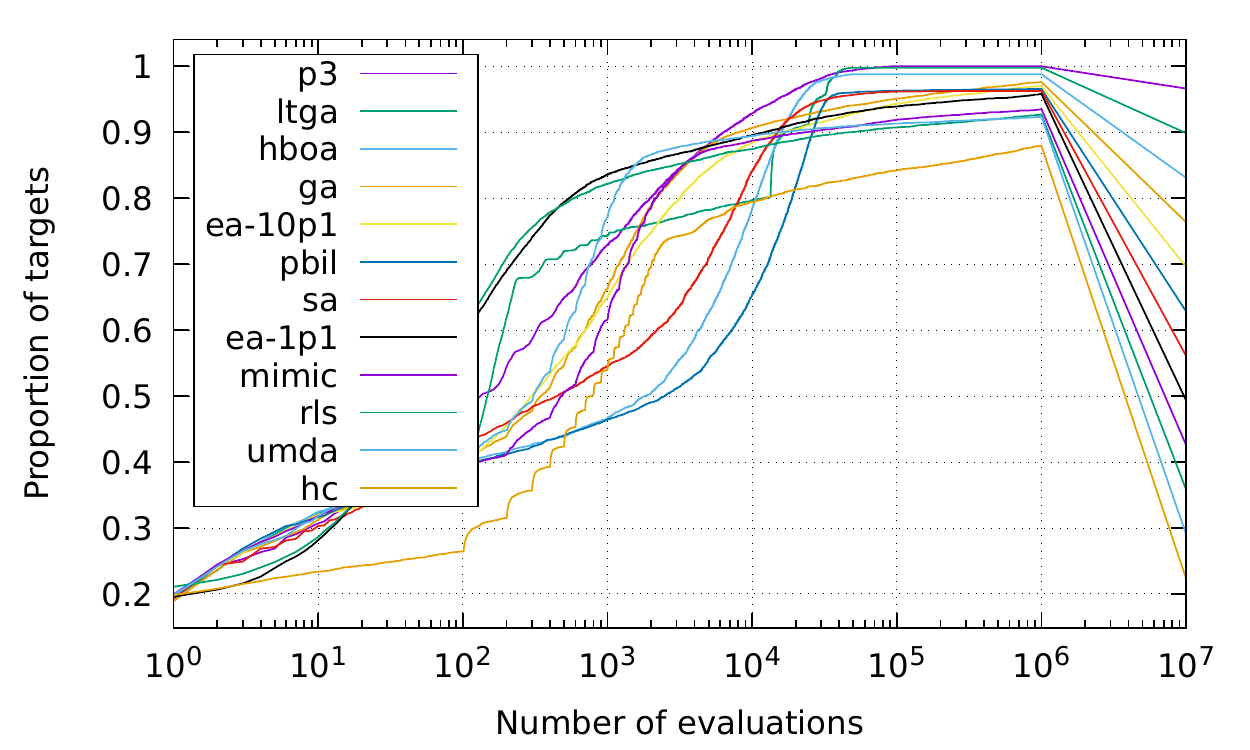}
  \caption{ECDF of search algorithms on 10 TS-AOM functions
    ($n = 100$, $t = 100$, $10^6$ evaluations, 20 run). Segments on
    the far right are helper lines to distinguish search algorithms.}
  \label{fig:ts_aom_unconstrained_ecdf}
\end{figure}

Fig.~\ref{fig:ts_aom_unconstrained_ecdf} shows the empirical
cumulative distribution functions
\cite{DBLP:journals/corr/HansenABTT16} of search algorithms on the
same set of 10 TS-AOM functions with $n= 100$ and $t = 100$.

Fig.~\ref{fig:aom_runtime} shows the mean runtime of $(1+1)$ EA on AOM
functions as a function of $n$. The results suggest that the expected
runtime of $(1+1)$ EA on AOM functions is exponential in $n$.
Fig.~\ref{fig:ts_aom_unconstrained_runtime} shows the mean runtime of
$(1+1)$ EA on TS-AOM functions with $n = 11$ as a function of $t$. As
$t\rightarrow \infty$, the runtime converges to an asymptotic value
which is the runtime observed in the general case (see
Fig.~\ref{fig:aom_runtime}). As already noted, this is consistent with
the theory of random walks on finite groups.

\begin{figure}
  \centering
  \includegraphics[width=0.8\linewidth]{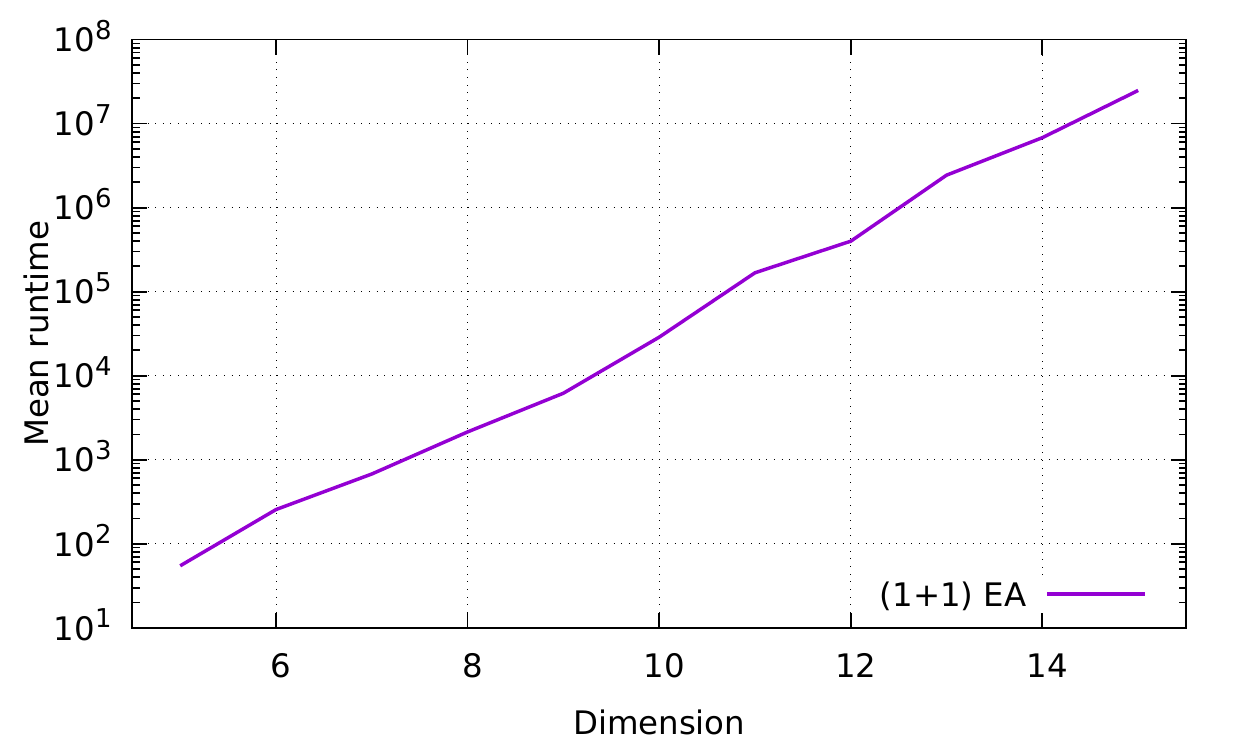}
  \caption{Mean runtime of $(1+1)$ EA on AOM functions as a function
    of $n$ (50 runs).}
  \label{fig:aom_runtime}
\end{figure}

\begin{figure}
  \centering
  \includegraphics[width=0.8\linewidth]{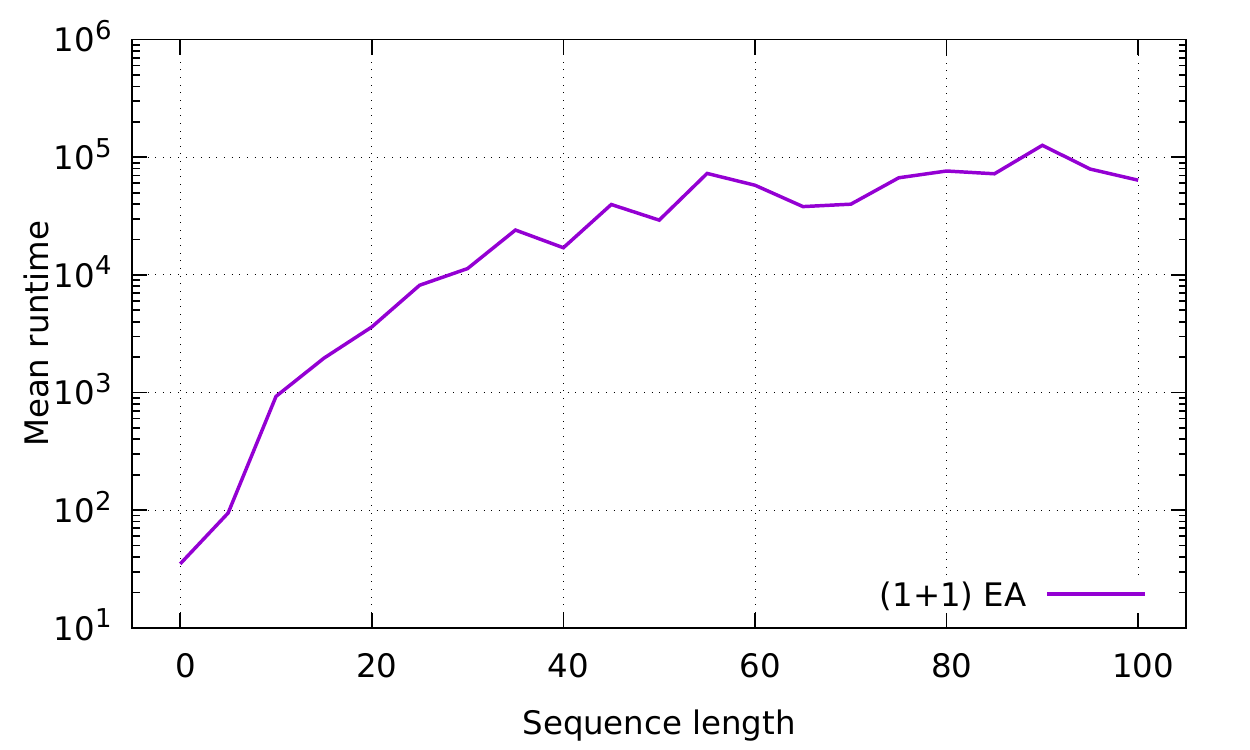}
  \caption{Mean runtime of $(1+1)$ EA on TS-AOM functions as a
    function of $t$ ($n = 11$, 100 runs).}
  \label{fig:ts_aom_unconstrained_runtime}
\end{figure}

Fig.~\ref{fig:ts_aom_commuting_noncommuting_runtime_t} shows the mean
runtime of $(1+1)$ EA on TS-AOM functions with commuting and non
commuting transvections and $n=9$ as a function of $t$. By
noncommuting transvections we mean that no two consecutive
transvections in the sequence commute. We acknowledge that such TS-AOM
functions are negligible in the set
$\mathcal{F}_t \setminus \mathcal{F}_t^c$. The maximum sequence length
is set to $(n^2-1)/4=20$, which is the maximum allowed for commuting
transvections. Some functions with commuting transvections are
significantly harder than others. On the contrary to commuting
transvections, the mean and the standard deviation of the runtime with
non commuting transvections do not exhibit any peak.

\begin{figure}
  \centering
  \includegraphics[width=0.8\linewidth]{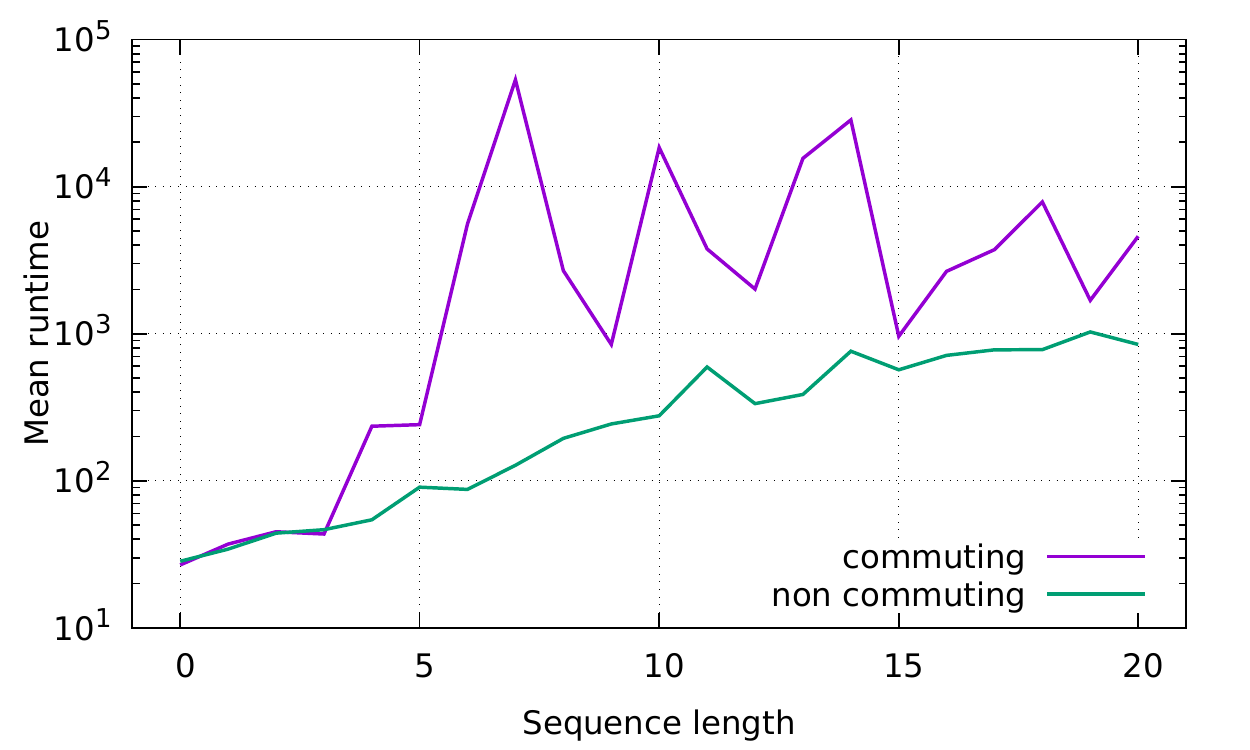}
  \caption{Mean runtime of $(1+1)$ EA on TS-AOM functions with
    commuting and noncommuting transvections as a function of $t$
    ($n=9$, 100 runs).}
  \label{fig:ts_aom_commuting_noncommuting_runtime_t}
\end{figure}

Fig.~\ref{fig:ts_aom_delta_sigma_runtime_t} shows the mean runtime of
$(1+1)$ EA on functions in $\mathcal{F}_t^\delta$ (unique destination)
and $\mathcal{F}_t^\sigma$ (unique source) with $n=9$ as a function of
$t$. The maximum sequence length is set to $n-1=8$, which is the
maximum allowed for both $\mathcal{F}_t^\delta$ and
$\mathcal{F}_t^\sigma$. There is a clear separation between the two
classes and $\mathcal{F}_t^\sigma$ appears to contain the hardest
functions.

\begin{figure}
  \centering
  \includegraphics[width=0.8\linewidth]{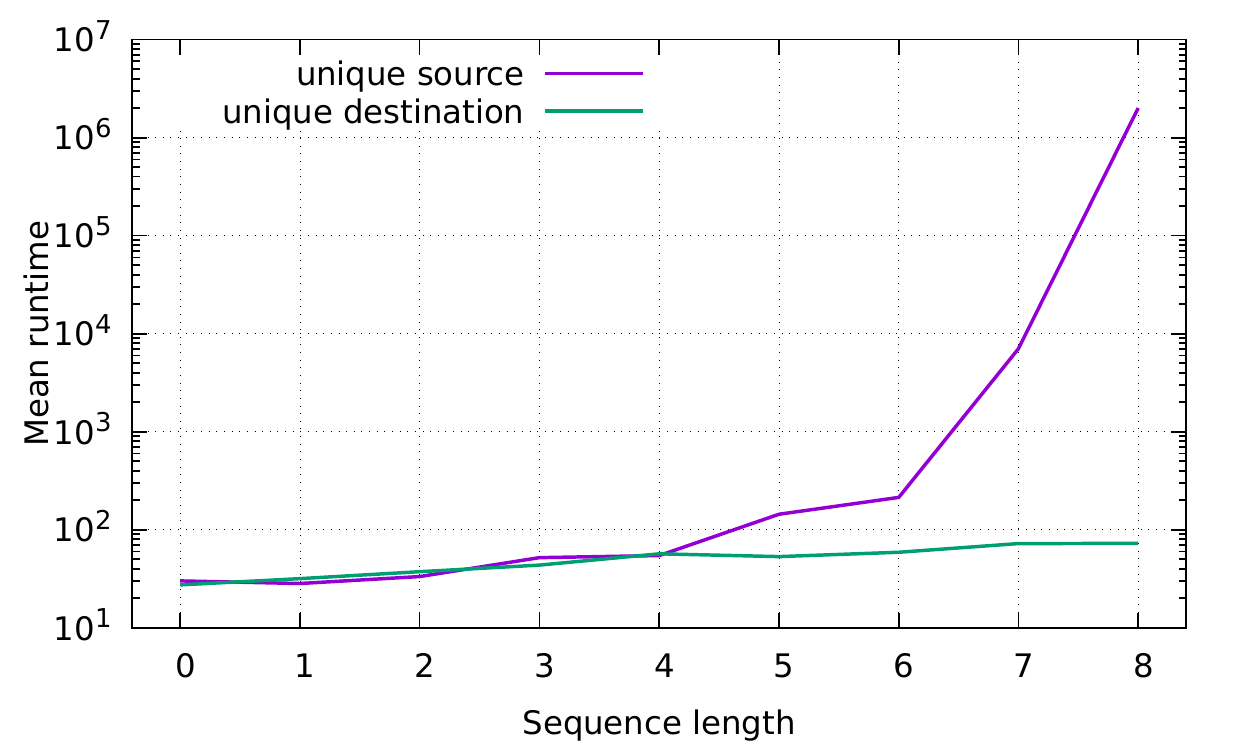}
  \caption{Mean runtime of $(1+1)$ EA on functions in classes
    $\mathcal{F}_t^\delta$ (unique destination) and
    $\mathcal{F}_t^\sigma$ (unique source) as a function of $t$
    ($n = 9$, 50 runs).}
  \label{fig:ts_aom_delta_sigma_runtime_t}
\end{figure}

Fig.~\ref{fig:ts_aom_disjoint_runtime} shows the mean runtime of
$(1+1)$ EA on TS-AOM functions with disjoint transvections and
sequence length $\floor{n/2}$ as a function of $n$. The results
suggest that the expected runtime of $(1+1)$ EA is asymptotically
slightly greater on $\mathcal{F}_{\floor{n/2}}^d$ than on OneMax.

\begin{figure}
  \centering
  \includegraphics[width=0.8\linewidth]{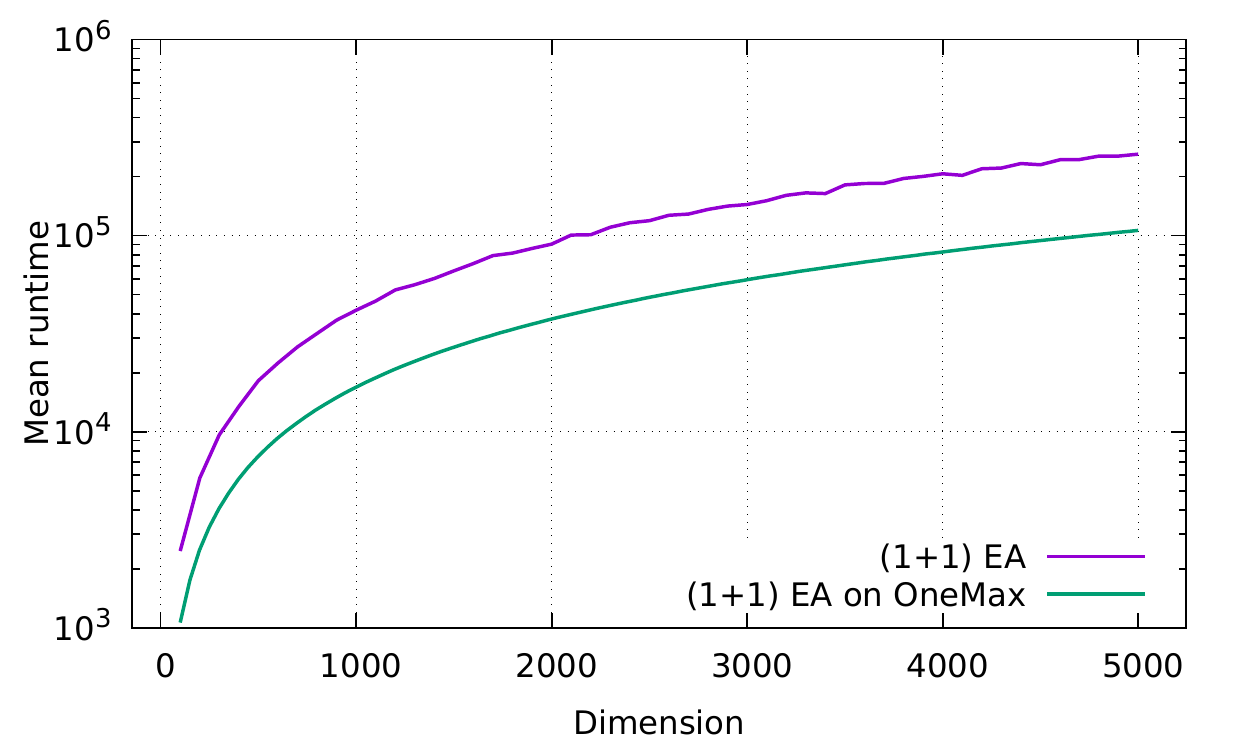}
  \caption{Mean runtime of $(1+1)$ EA on TS-AOM functions with
    disjoint transvections and sequence length $\floor{n/2}$ as a
    function of $n$ (100 runs).}
  \label{fig:ts_aom_disjoint_runtime}
\end{figure}

\section{Conclusion}
\label{sec:conclusion}

We have introduced affine OneMax functions which are test functions
for search algorithms. They are defined as compositions of OneMax and
invertible affine maps on bit vectors. They have a simple
representation and a known maximum. Tunable complexity is achieved by
expressing invertible linear maps as finite products of transvections.
The complexity is controlled by the length of transvection sequences
and their properties. Transvection sequence AOM functions with small
sequence length are of practical interest in the benchmarking of
search algorithms. We have shown by means of Fourier analysis that the
black box complexity of AOM functions is upper bounded by a high
degree polynomial. However, it can be as low as logarithmic for the
simplest AOM functions.

Many open questions remain. The gap between the lower and upper bounds
of the black box complexity in the general case is significant and
should be reduced. A rigorous analysis of the runtime of $(1+1)$ EA on
AOM functions would be of great interest. OneMax is one of the
simplest nontrivial functions used in black box optimization. It seems
legitimate to consider alternative functions to define new classes in
the same way as for AOM functions. Candidate functions include
pseudo-linear functions or LeadingOnes. However, since LeadingOnes is
not sparse, the method presented in this paper does not apply.

\bibliographystyle{plain}

\bibliography{bibliography}

\end{document}